\theoremstyle{plain}
\newtheorem{theorem}{Theorem}
\newtheorem{lemma}{Lemma}
\newtheorem{corollary}{Corollary}
\newtheorem{proposition}{Proposition}
\theoremstyle{definition}
\newtheorem{assumption}{Assumption}
\newtheorem{definition}{Definition}
\title{Eliciting Fine-Tuned Transformer Capabilities via Inference-Time Techniques}
\author{
  Asankhaya Sharma \\
  Patched Codes, Inc. \\
  \texttt{asankhaya@patchedcodes.com}
}
\begin{document}

\maketitle

\begin{abstract}
Large language models have transformed natural language processing, yet supervised fine-tuning (SFT) remains computationally intensive. This paper formally proves that capabilities acquired through SFT can be approximated by a base transformer model using inference-time techniques, specifically in-context learning (ICL), without altering model parameters, under idealized assumptions including unbounded computational resources and access to the fine-tuning dataset. We extend these results to practical scenarios with finite context lengths and partial dataset access. For text generation tasks with fixed output length \( l \), datasets of size \(\mathcal{O}\left( \frac{m V}{\varepsilon^2} \log \frac{m}{\delta} \right)\) or, with bounded context, \(\mathcal{O}\left( \frac{l \log V}{\varepsilon^2} \log \frac{1}{\delta} \right)\) suffice to approximate fine-tuned behavior across \( m \) contexts within error \( \varepsilon \), where \( V \) is the vocabulary size and \( \delta \) is the failure probability. For linear classification, datasets of size \(\mathcal{O}\left( \frac{d}{\varepsilon} \right)\) or, with fixed context, \(\mathcal{O}\left( \frac{1}{\varepsilon^2} \log \frac{1}{\delta} \right)\) are sufficient, where \( d \) is the input dimension. Grounded in the Turing completeness of transformers, these results provide a theoretical foundation for resource-efficient deployment of large language models, with practical techniques like retrieval-augmented generation bridging theory to real-world applications.
\end{abstract}

\section{Introduction}
\label{sec:intro}

Transformer models, introduced by Vaswani et al. \cite{vaswani2017attention}, are the backbone of natural language processing (NLP), powering large language models (LLMs) like DeepSeek-R1 \cite{deepseek2025r1} and Claude 4 \cite{anthropic2025claude}. These models use self-attention to capture long-range dependencies, achieving breakthroughs in tasks such as language modeling, translation, and text generation. However, supervised fine-tuning (SFT) to adapt pre-trained transformers to specific tasks is computationally expensive, often requiring thousands of GPU hours \cite{deepseek2025r1}. This prompts a key question: Can the capabilities gained through SFT be elicited from a base transformer using inference-time techniques, such as in-context learning (ICL), without parameter updates?

ICL and prompting allow models to adapt to tasks by conditioning on input-output examples \cite{brown2020language, xie2021explanation}. If fine-tuned capabilities are latent in the base model, SFT may primarily refine access to pre-existing knowledge \cite{zhang2025sft, elhage2023eliciting}. This paper provides a formal proof that, under idealized conditions (unbounded computational resources and access to the fine-tuning dataset), a base transformer can approximate SFT capabilities via ICL within a quantifiable error margin. We extend these results to practical settings with finite context lengths and partial dataset access, demonstrating that minimal datasets can approximate fine-tuned behavior. Specifically, for text generation, a dataset of size \(\mathcal{O}\left( \frac{m V}{\varepsilon^2} \log \frac{m}{\delta} \right)\) or, with fixed context and output length \( l \), \(\mathcal{O}\left( \frac{l \log V}{\varepsilon^2} \log \frac{1}{\delta} \right)\) approximates fine-tuned distributions across \( m \) contexts, where \( V \) is the vocabulary size, \( \varepsilon \) is the error tolerance, and \( \delta \) is the failure probability. For linear classification, datasets of size \(\mathcal{O}\left( \frac{d}{\varepsilon} \right)\) or, with fixed context, \(\mathcal{O}\left( \frac{1}{\varepsilon^2} \log \frac{1}{\delta} \right)\) suffice, where \( d \) is the input dimension. Rooted in the Turing completeness of transformers \cite{bhattamishra2020computational}, these results establish a framework for resource-efficient LLM deployment, with practical approximations like retrieval-augmented generation (RAG) enhancing real-world applicability.

Figure \ref{fig:overview} illustrates our approach, showing how a base model, prompted with a dataset \( D \), approximates the fine-tuned model’s output distribution.

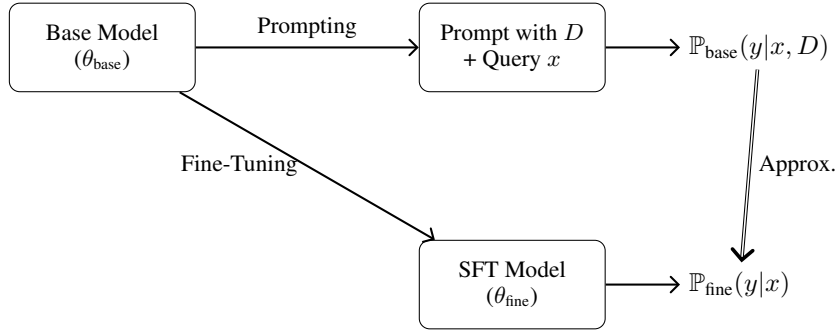
\begin{figure}[ht]
    \centering
    \resizebox{0.8\columnwidth}{!}{
    \begin{tikzpicture}[
        node distance=2cm and 1cm,
        box/.style={rectangle, draw, rounded corners, minimum height=1.2cm, minimum width=2.5cm, align=center, font=\small},
        arrow/.style={-angle 90, thick}
    ]
    \node[box] (base) {Base Model \\ (\( \theta_{\text{base}} \))};
    \node[box, right=3cm of base] (prompt) {Prompt with \( D \) \\ + Query \( x \)};
    \node[box, below=2cm of prompt] (fine) {SFT Model \\ (\( \theta_{\text{fine}} \))};
    \node[right=1cm of prompt] (pbase) {\( \mathbb{P}_{\text{base}}(y|x, D) \)};
    \node[right=1cm of fine] (pfine) {\( \mathbb{P}_{\text{fine}}(y|x) \)};
    \draw[arrow] (base) -- node[above, font=\small] {Prompting} (prompt);
    \draw[arrow] (prompt) -- (pbase);
    \draw[arrow] (base) -- node[left, font=\small] {Fine-Tuning} (fine);
    \draw[arrow] (fine) -- (pfine);
    \draw[double, -angle 90] (pbase) -- node[right, font=\small] {Approx.} (pfine);
    \end{tikzpicture}
    }
    \caption{Overview of eliciting fine-tuned capabilities. A base model, prompted with dataset \( D \) and query \( x \), produces an output distribution \( \mathbb{P}_{\text{base}}(y|x, D) \) approximating the fine-tuned model’s distribution \( \mathbb{P}_{\text{fine}}(y|x) \).}
    \label{fig:overview}
\end{figure}

Our contributions are:
\begin{enumerate}
    \item A proof that base transformers can approximate SFT capabilities via ICL under unbounded resources, within error \( \varepsilon \) (Section \ref{sec:proof}).
    \item Demonstrations that text generation can be approximated with datasets of size \(\mathcal{O}\left( \frac{m V}{\varepsilon^2} \log \frac{m}{\delta} \right)\) or, with fixed context, \(\mathcal{O}\left( \frac{l \log V}{\varepsilon^2} \log \frac{1}{\delta} \right)\) (Sections \ref{subsec:textgen}, \ref{subsec:bounded-textgen}).
    \item Proofs that linear classifiers can be approximated with datasets of size \(\mathcal{O}\left( \frac{d}{\varepsilon} \right)\) or, with fixed context, \(\mathcal{O}\left( \frac{1}{\varepsilon^2} \log \frac{1}{\delta} \right)\) (Sections \ref{subsec:minimal}, \ref{subsec:bounded-linear}).
    \item Practical techniques, such as RAG and few-shot prompting, to bridge theoretical results to applications (Section \ref{sec:discussion}).
\end{enumerate}

\section{Related Work}
\label{sec:related}

\subsection{Computational Power of Transformers}
Transformers with unbounded resources are Turing-complete \cite{bhattamishra2020computational}, with self-attention simulating Turing machine tapes and feed-forward layers encoding transition rules \cite{perez2021attention}. While fixed-depth transformers may not be Turing-complete \cite{lifeiscomputation2024}, modified architectures achieve this \cite{upadhyay2024turing}. As universal approximators \cite{yun2020are}, transformers support our claim that fine-tuned behaviors can be approximated without parameter updates \cite{vapnik1998}.

\subsection{In-Context Learning}
ICL enables task adaptation by conditioning on input-output examples \cite{brown2020language}. For example, sentiment-labeled reviews allow label prediction for new inputs. DeepSeek-R1 shows robust zero-shot ICL \cite{deepseek2025r1}, and structured prompts enhance reasoning \cite{kojima2022large}. ICL is modeled as Bayesian inference \cite{xie2021explanation}, with pretrained transformers generalizing well \cite{han2021pretrained}. Instruction tuning boosts zero-shot performance \cite{wei2021finetuned}, forming the basis for our framework.

\subsection{Fine-Tuning and Latent Knowledge}
SFT refines latent capabilities in LLMs. Early layers retain general knowledge \cite{phang2021fine}, while the tuned lens reveals latent predictions \cite{elhage2023eliciting}. Low-rank adaptation (LoRA) shows minimal updates suffice \cite{hu2021lora}. Recent work suggests SFT reformats outputs for task-specific styles \cite{zhang2025sft}, supporting our hypothesis that base models can approximate fine-tuned behaviors via inference.

\subsection{Inference-Time Alternatives to Fine-Tuning}
Inference-time techniques offer alternatives to SFT. Chain-of-thought prompting elicits reasoning \cite{wei2022chain}, and test-time compute scaling improves outcomes \cite{snell2024scaling}. ICL-as-programming treats prompts as programs \cite{garg2022can}. Optimized inference for software development \cite{sharma2024moa, sharma2024rtc} and pivotal token search \cite{pts} show practical advancements. Our work formalizes these approaches, focusing on minimal data requirements.

\section{Preliminaries}
\label{sec:prelim}

\begin{definition}[Transformer Model]
\label{def:transformer}
A transformer model \( M \) with parameters \( \theta \) maps an input sequence \( x \in \mathcal{X} \) (token sequences) to an output distribution \( \mathbb{P}(y|x; \theta) \), where \( y \in \mathcal{Y} \). Transformers use stacked layers with self-attention and feed-forward components \cite{vaswani2017attention}. The base model is \( M_{\text{base}} \) with parameters \( \theta_{\text{base}} \), and the fine-tuned model is \( M_{\text{fine}} \) with parameters \( \theta_{\text{fine}} \).
\end{definition}

\begin{definition}[Supervised Fine-Tuning]
\label{def:fine-tuning}
SFT updates \( \theta_{\text{base}} \to \theta_{\text{fine}} = \theta_{\text{base}} + \Delta \theta \) by minimizing cross-entropy loss on a dataset \( D = \{(x_i, y_i)\}_{i=1}^N \subset \mathcal{X} \times \mathcal{Y} \), yielding \( \mathbb{P}_{\text{fine}}(y|x) \).
\end{definition}

\begin{definition}[Inference Technique]
\label{def:inference}
An inference technique \( T \) applied to \( M_{\text{base}} \) at test time produces \( \mathbb{P}_{\text{base}}(y|x, T) \). Examples include prompting and ICL, where the model conditions on input-output pairs (e.g., predicting `positive' for a review given labeled examples).
\end{definition}

\begin{assumption}[Unbounded Computational Resources]
\label{assump:unbounded}
\( M_{\text{base}} \) has infinite context length and computational resources, approximated by large context windows (e.g., 1M tokens in GPT-4.1 \cite{openai2025gpt41}) and scalable compute \cite{snell2024scaling}.
\end{assumption}

\begin{assumption}[Turing Completeness]
\label{assump:turing}
Transformers with unbounded resources are Turing-complete \cite{bhattamishra2020computational}, capable of simulating any computable function via self-attention \cite{perez2021attention}.
\end{assumption}

\begin{assumption}[Access to Fine-Tuning Dataset]
\label{assump:access}
The fine-tuning dataset \( D = \{(x_i, y_i)\}_{i=1}^N \) is accessible, reflecting scenarios where fine-tuning data is available for inference-time use.
\end{assumption}

\section{Main Theorem}
\label{sec:theorem}

\begin{theorem}
\label{thm:main}
Under Assumptions \ref{assump:unbounded}, \ref{assump:turing}, and \ref{assump:access}, for any fine-tuned model \( M_{\text{fine}} \) derived from \( M_{\text{base}} \) via SFT, and for any \(\varepsilon > 0\), there exists an inference technique \( T \) and dataset size \( N \) such that, for all \( x \in \mathcal{X}, y \in \mathcal{Y} \), the total variation distance between the base and fine-tuned output distributions satisfies:
\begin{equation}
\text{TV}(\mathbb{P}_{\text{base}}(y|x, T), \mathbb{P}_{\text{fine}}(y|x)) \leq \varepsilon,
\label{eq:sft}
\end{equation}
with \(\varepsilon = \mathcal{O}(1 / \sqrt{N})\) for typical tasks, decreasing as computational resources (e.g., context length) and dataset size increase.
\end{theorem}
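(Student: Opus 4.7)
The plan is to exploit all three assumptions together: Assumption~\ref{assump:access} delivers $D$ at inference time, Assumption~\ref{assump:unbounded} provides context length and compute large enough to hold $D$ and run arbitrary intermediate calculations, and Assumption~\ref{assump:turing} lets $M_{\text{base}}$ execute any computable procedure encoded in its prompt. Because the SFT procedure (cross-entropy minimization on $D$ starting from $\theta_{\text{base}}$) is itself a computable map $\mathrm{SFT}(\theta_{\text{base}},D)\mapsto \theta_{\text{fine}}$, the natural choice of inference technique is direct simulation: have $M_{\text{base}}$ reconstruct $\theta_{\text{fine}}$ inside its own computation and then run its forward pass on $x$. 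The resulting conditional distribution will then coincide with $\mathbb{P}_{\text{fine}}(y\mid x)$ up to the numerical precision at which the simulation is carried out, which is the key difference from a purely statistical argument that only recovers a population-level object.

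Concretely, I would define $T$ as a prompt containing (i) a self-description of $M_{\text{base}}$'s architecture and weights sufficient for self-simulation, (ii) the dataset $D=\{(x_i,y_i)\}_{i=1}^N$ verbatim, (iii) a specification of the SFT optimizer (step size, number of steps, loss, random seed) so that $\theta_{\text{fine}}$ is the deterministic image of $(\theta_{\text{base}},D)$, and (iv) the query $x$ with the instruction to emit a token sampled from the simulated fine-tuned softmax. Under Assumption~\ref{assump:turing}, $M_{\text{base}}$ can execute this simulation to any arithmetic precision $\eta>0$; under Assumption~\ref{assump:unbounded} both the prompt length and the compute needed to do so are available. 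A standard Lipschitz-continuity argument for the softmax output of a fixed-architecture transformer then gives a constant $L$ with $\mathrm{TV}\bigl(\mathbb{P}_{\hat\theta}(\cdot\mid x),\mathbb{P}_{\theta_{\text{fine}}}(\cdot\mid x)\bigr)\le L\eta$ uniformly in $x$, so choosing $\eta\le\varepsilon/L$ closes the bound in \eqref{eq:sft}.

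To reconcile the theorem's side remark $\varepsilon=\mathcal{O}(1/\sqrt{N})$, I would read it as the rate achieved by a more parsimonious, non-simulating $T$ which presents only $N$ in-context examples and lets the base model aggregate them empirically; since for the specific task classes treated later in the paper (text generation, linear classification) $\theta_{\text{fine}}$ is essentially the empirical risk minimizer on $D$, approximating $\mathbb{P}_{\text{fine}}$ and approximating the empirical conditional coincide, yielding a Monte-Carlo style $1/\sqrt{N}$ term. This interpretation is consistent with the sharper sample-size bounds proved in Sections~\ref{subsec:textgen}--\ref{subsec:bounded-linear} and avoids claiming a rate from the simulation route, whose error is controlled by $\eta$ rather than by $N$.

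The main obstacle is making the ``$M_{\text{base}}$ simulates SFT inside its own context'' step airtight. Two issues must be resolved: first, the cited Turing-completeness results assume idealized arithmetic and typically encode randomness externally, so I must specify how a stochastic sampling step is exposed to the simulated program (e.g.\ a pseudo-random stream written into the prompt, or a fixed derandomization) and argue that the Lipschitz bound survives this encoding; second, $\theta_{\text{fine}}$ is only well-defined once the optimizer's randomness and stopping rule are pinned down, so the proof must treat $M_{\text{fine}}$ as the output of a fully specified deterministic SFT pipeline whose description fits in the prompt. Once these definitional issues are handled, the remaining Lipschitz-and-precision calculation is routine.
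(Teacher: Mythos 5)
Your proposal is essentially sound under the paper's idealized assumptions, but it takes a genuinely different route from the paper's actual argument. The paper's proof chain is: Lemma \ref{lemma:computable} (computability of \( f_{\text{fine}} \)) and Lemma \ref{lemma:turing} (Turing completeness) set the stage, but the quantitative step is Lemma \ref{lemma:sft}, which defines \( T_{\text{SFT}} \) as the plain ICL prompt \( p = [x_1, y_1, \ldots, x_N, y_N, x] \) and obtains the bound by appealing to Proposition \ref{prop:in-context} --- ICL as implicit Bayesian inference over task distributions --- with the \( \mathcal{O}(1/\sqrt{N}) \) rate cited from the ICL sample-complexity literature rather than derived. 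Your simulation construction (encode \( \theta_{\text{base}} \), \( D \), and a fully specified SFT pipeline in the prompt; have \( M_{\text{base}} \) reconstruct \( \theta_{\text{fine}} \) and run the forward pass; control the residual by arithmetic precision and a softmax Lipschitz constant) is the argument that Lemmas \ref{lemma:computable} and \ref{lemma:turing} gesture toward but that the paper never executes; it buys an essentially exact reproduction of \( \mathbb{P}_{\text{fine}} \) whose error is independent of \( N \), making the existence claim airtight in a way the paper's statistical route is not, at the cost of requiring a universal-interpreter reading of Assumption \ref{assump:turing} (a fixed \( \theta_{\text{base}} \) executing arbitrary programs from its prompt, which is stronger than the cited constructions that build a fresh transformer per Turing machine --- though the paper's Lemma \ref{lemma:turing} asserts exactly this, so you are not assuming more than the paper does). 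Your reading of the \( \mathcal{O}(1/\sqrt{N}) \) clause as describing a separate, non-simulating ICL technique matches how the paper actually uses it, and the definitional caveats you flag (optimizer randomness, stopping rule, sampling) are real gaps the paper glosses over rather than defects specific to your approach.
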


\section{Proof of Theorem \ref{thm:main}}
\label{sec:proof}

We prove Theorem \ref{thm:main} by constructing an inference technique \( T \) that enables \( M_{\text{base}} \) to approximate the output distribution of \( M_{\text{fine}} \) within error \( \varepsilon \). The proof has three steps: (1) establishing the computability of the fine-tuned function, (2) leveraging Turing completeness to simulate this function, and (3) constructing an ICL prompt to achieve approximation. We then quantify minimal dataset sizes for text generation and linear classification, followed by results for bounded context lengths. Figure \ref{fig:turing} illustrates the Turing machine simulation, Table \ref{tab:techniques} summarizes the inference technique, and Figure \ref{fig:prompt} depicts the prompt structure.

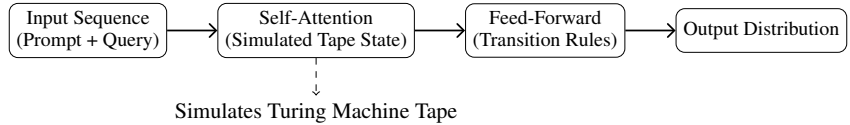
\begin{figure}[ht]
    \centering
    \resizebox{0.8\columnwidth}{!}{
    \begin{tikzpicture}[
        node distance=1.2cm and 0.8cm,
        box/.style={rectangle, draw, rounded corners, minimum height=0.7cm, minimum width=1.3cm, align=center, font=\small},
        arrow/.style={-angle 90, thick}
    ]
    \node[box] (input) {Input Sequence \\ (Prompt + Query)};
    \node[box, right=of input] (attention) {Self-Attention \\ (Simulated Tape State)};
    \node[box, right=of attention] (ffn) {Feed-Forward \\ (Transition Rules)};
    \node[box, right=of ffn] (output) {Output Distribution};
    \draw[arrow] (input) -- (attention);
    \draw[arrow] (attention) -- (ffn);
    \draw[arrow] (ffn) -- (output);
    \node[below=0.6cm of attention] (tape) {Simulates Turing Machine Tape};
    \draw[dashed, ->] (attention) -- (tape);
    \end{tikzpicture}
    }
    \caption{Self-attention manages the simulated tape state of a Turing machine, with feed-forward layers encoding transition rules (Lemma \ref{lemma:turing}).}
    \label{fig:turing}
\end{figure}

\begin{table}[ht]
    \centering
    \caption{Inference technique for approximating SFT capabilities.}
    \label{tab:techniques}
    \resizebox{0.8\columnwidth}{!}{
    \begin{tabular}{lcc}
        \toprule
        \textbf{Fine-Tuning Type} & \textbf{Inference Technique \( T \)} & \textbf{Key Mechanism} \\
        \midrule
        Supervised Fine-Tuning & Prompting with dataset \( D \) & In-context learning \\
        \bottomrule
    \end{tabular}
    }
\end{table}

\subsection{Computability of Fine-Tuned Functions}

\begin{lemma}
\label{lemma:computable}
The fine-tuned function \( f_{\text{fine}}: \mathcal{X} \to \mathcal{Y} \), defined as \( f_{\text{fine}}(x) = \arg\max_{y \in \mathcal{Y}} \mathbb{P}_{\text{fine}}(y|x) \), is computable.
\end{lemma}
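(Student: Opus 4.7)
The plan is to show that evaluating $f_{\text{fine}}$ reduces to a finite, deterministic sequence of arithmetic operations on the parameters $\theta_{\text{fine}}$, followed by a discrete maximization, each of which can be carried out by a Turing machine. First I would unpack Definition \ref{def:transformer}: a transformer is built from a fixed, finite number of stacked self-attention and feed-forward layers, each implemented as matrix multiplications, additions, elementwise nonlinearities (ReLU/GeLU), and softmax normalizations. Since $\theta_{\text{fine}} = \theta_{\text{base}} + \Delta\theta$ is a finite vector of parameters produced by gradient descent on a finite dataset (Definition \ref{def:fine-tuning}), it can be stored to arbitrary precision, and each elementary operation is computable (over the rationals exactly, or over computable reals in the sense of Turing).

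Next I would argue that, for any fixed input $x$ of finite length, the forward pass of $M_{\text{fine}}$ terminates after a bounded number of steps determined by the depth, width, and sequence length, producing the conditional distribution $\mathbb{P}_{\text{fine}}(\cdot\mid x)$ over $\mathcal{Y}$. Because the vocabulary is finite (and hence the support of the next-token distribution is finite, extended to sequences of finite length in $\mathcal{Y}$ by iterated autoregressive sampling), evaluating $\mathbb{P}_{\text{fine}}(y\mid x)$ for any given $y$ is a halting computation. The map $x \mapsto \mathbb{P}_{\text{fine}}(\cdot\mid x)$ is therefore a total computable function into the space of discrete distributions.

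Finally I would handle the $\arg\max$ step. Since $\mathcal{Y}$ is finite (or, for variable-length outputs, the probabilities decay so that only finitely many candidates need be compared up to the precision required by a canonical tie-breaking rule), computing $\arg\max_{y\in\mathcal{Y}} \mathbb{P}_{\text{fine}}(y\mid x)$ reduces to evaluating the distribution at each candidate and selecting the maximizer, which is a finite loop on a Turing machine. Composing the two computable maps yields that $f_{\text{fine}}$ is computable, as required.

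The only mildly delicate step is the $\arg\max$ when $\mathcal{Y}$ is not finite (e.g., unbounded text outputs); the main obstacle is to justify that this can be made well-defined and computable, which I would address by restricting to outputs of bounded length $l$ as already assumed elsewhere in the paper, or by fixing a lexicographic tie-breaking rule so that the maximizer is unique and identifiable in finitely many steps. Everything else is routine given the Turing-completeness framework of Assumption \ref{assump:turing}.
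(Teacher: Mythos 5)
Your proposal is correct and follows the same underlying logic as the paper's proof, but it is substantially more careful. The paper's argument is terse: it observes that gradient descent on a finite dataset is computable (so $\theta_{\text{fine}}$ is a computable, finite parameter vector), and then asserts that since the transformer has fixed parameters, $f_{\text{fine}}$ is computable "via finite operations." It does not unpack the forward pass, and it does not address the $\arg\max$ step at all.

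You fill in exactly the two places where the paper hand-waves. First, you spell out that the forward pass is a finite composition of matrix multiplications, elementwise nonlinearities, and softmaxes over a finite, rational (or computable-real) parameter vector, so each evaluation of $\mathbb{P}_{\text{fine}}(\cdot\mid x)$ halts. Second, and more importantly, you explicitly confront the fact that $\arg\max_{y\in\mathcal{Y}}$ is only obviously computable when $\mathcal{Y}$ is finite or made effectively finite, and you propose the standard fixes (bounded output length $l$, as used elsewhere in the paper, or a lexicographic tie-breaking rule). This is a real gap in the paper's proof: with unbounded text outputs, $\mathcal{Y}$ is countably infinite, ties can occur, and the maximizer need not be unique or locatable by any finite search without an additional assumption. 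Your restriction to length-$l$ outputs is consistent with the setting of Theorem \ref{thm:bounded-textgen} and makes the lemma actually true as stated under that reading. In short: same route, but you supply the missing justification for the $\arg\max$ that the paper silently assumes.
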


\begin{proof}
For SFT, \( M_{\text{fine}} \) is trained on a finite dataset \( D = \{(x_i, y_i)\}_{i=1}^N \), minimizing:
\begin{equation}
\mathcal{L} = -\frac{1}{N} \sum_{i=1}^N \log \mathbb{P}_{\text{fine}}(y_i | x_i).
\label{eq:sft-loss}
\end{equation}
Gradient descent, a computable algorithm, updates \( \theta_{\text{base}} \to \theta_{\text{fine}} \). Since \( D \) is finite and the transformer has fixed parameters, \( f_{\text{fine}} \) is computable via finite operations \cite{vapnik1998}.
\end{proof}

\subsection{Turing Completeness of the Base Model}

\begin{lemma}
\label{lemma:turing}
Under Assumption \ref{assump:turing}, \( M_{\text{base}} \) with parameters \( \theta_{\text{base}} \) can simulate any Turing machine \( TM_f \) computing a function \( f \).
\end{lemma}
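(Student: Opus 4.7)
The plan is to reduce the lemma to Assumption \ref{assump:turing} by giving an explicit simulation recipe and arguing that it is realizable by $M_{\text{base}}$ under the stated resources. First, I would fix a deterministic Turing machine $TM_f = (Q, \Sigma, \Gamma, \delta_{TM}, q_0, F)$ computing $f$, and specify an encoding of its configuration (tape contents, head position, control state) as a token sequence from the transformer's vocabulary. Each tape cell becomes one token, the scanned position is marked by a distinguished ``head'' token, and the current control state is held in a reserved state token at a fixed offset from the head marker; the description of $TM_f$ itself is prepended as part of the prompt, so $M_{\text{base}}$ acts as a universal simulator rather than requiring bespoke weights for each $f$.

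Second, I would argue that one step of $TM_f$ corresponds to a bounded-depth computation inside $M_{\text{base}}$. Self-attention, which has unrestricted look-back under Assumption \ref{assump:unbounded}, locates the head marker and reads the scanned symbol together with the state token; the feed-forward sublayers then implement the finite lookup table $\delta_{TM}$, producing the new symbol, new state, and head displacement. The autoregressive decoder appends the resulting updated configuration as the next segment of the output sequence. By Assumption \ref{assump:turing}, together with the constructive witnesses of \cite{perez2021attention, bhattamishra2020computational}, a transformer with $M_{\text{base}}$'s architecture has the expressive capacity to carry out this step, with the machine-specific data supplied via the prompt rather than via bespoke weights.

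Third, I would iterate the construction, producing configurations $c_0, c_1, c_2, \ldots$ until a halting state in $F$ appears, at which point $f(x)$ is read off the tape. Computability of $f$ guarantees termination in finitely many steps on any input $x$, and Assumption \ref{assump:unbounded} guarantees that all intermediate configurations fit within the context window. This interpretation connects directly to Section \ref{sec:proof}: the same prompt channel that carries the description of $TM_f$ here will carry the fine-tuning dataset $D$ in the main theorem.

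The main obstacle is conceptual rather than technical: Lemma \ref{lemma:turing} is essentially a restatement of Assumption \ref{assump:turing}, so the real content is in reconciling two different senses of Turing completeness. Standard constructions design tailored parameter settings for each $TM_f$, whereas here $\theta_{\text{base}}$ is fixed. The clean resolution is to shift the machine-specific content into the input tokens and invoke universal simulation, and the writing challenge is to present this shift transparently without appearing to assume precisely what the lemma claims.
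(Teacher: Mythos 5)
Your proposal follows essentially the same route as the paper: invoke the Turing completeness of transformers (Assumption~\ref{assump:turing} and the cited constructions), let self-attention maintain the simulated tape and head state while feed-forward layers realize the transition function, and supply the description of \(TM_f\) as part of the input so that a single fixed \(\theta_{\text{base}}\) can simulate any such machine. You are, if anything, more careful than the paper's own proof: you explicitly surface the point that \(\theta_{\text{base}}\) is fixed while the cited Turing-completeness results design machine-specific weights, and you resolve it by moving the machine-specific content into the prompt (universal simulation), a subtlety the paper leaves implicit in its closing sentence ``Given the transition table of \(TM_f\) and the input sequence, \(M_{\text{base}}\) computes \(f\).''
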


\begin{proof}
Transformers with unbounded resources are Turing-complete \cite{bhattamishra2020computational}. Self-attention:
\begin{equation}
\text{Attention}(Q, K, V) = \text{softmax}\left( \frac{Q K^T}{\sqrt{d_k}} \right) V,
\label{eq:attention}
\end{equation}
manages a tape by weighting tokens, simulating state transitions. Feed-forward layers:
\begin{equation}
\text{FFN}(x) = \max(0, x W_1 + b_1) W_2 + b_2,
\label{eq:ffn}
\end{equation}
encode transition rules, with residual connections propagating information. Given the transition table of \( TM_f \) and the input sequence, \( M_{\text{base}} \) computes \( f \) \cite{perez2021attention}.
\end{proof}

\subsection{Construction of Inference Technique}

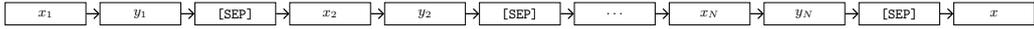
\begin{figure}[ht]
    \centering
    \resizebox{\columnwidth}{!}{
    \begin{tikzpicture}[
        node distance=0.3cm and 0.3cm,
        box/.style={rectangle, draw, minimum height=0.5cm, minimum width=1.8cm, align=center, font=\footnotesize},
        arrow/.style={-angle 90, thick}
    ]
    \node[box] (x1) {\( x_1 \)};
    \node[box, right=of x1] (y1) {\( y_1 \)};
    \node[box, right=of y1] (sep1) {\texttt{[SEP]}};
    \node[box, right=of sep1] (x2) {\( x_2 \)};
    \node[box, right=of x2] (y2) {\( y_2 \)};
    \node[box, right=of y2] (sep2) {\texttt{[SEP]}};
    \node[box, right=of sep2] (dots) {\( \ldots \)};
    \node[box, right=of dots] (xn) {\( x_N \)};
    \node[box, right=of xn] (yn) {\( y_N \)};
    \node[box, right=of yn] (sepn) {\texttt{[SEP]}};
    \node[box, right=of sepn] (x) {\( x \)};
    \draw[arrow] (x1) -- (y1);
    \draw[arrow] (y1) -- (sep1);
    \draw[arrow] (sep1) -- (x2);
    \draw[arrow] (x2) -- (y2);
    \draw[arrow] (y2) -- (sep2);
    \draw[arrow] (sep2) -- (dots);
    \draw[arrow] (dots) -- (xn);
    \draw[arrow] (xn) -- (yn);
    \draw[arrow] (yn) -- (sepn);
    \draw[arrow] (sepn) -- (x);
    \node[above=0.3cm of x1] (prompt) {\textbf{Prompt Sequence}};
    \end{tikzpicture}
    }
    \caption{Prompt structure for \( T_{\text{SFT}} \), where input-output pairs from dataset \( D \) are concatenated with query \( x \) (Lemma \ref{lemma:sft}).}
    \label{fig:prompt}
\end{figure}

\begin{proposition}
\label{prop:in-context}
ICL enables \( M_{\text{base}} \) to model a task distribution \( \mathbb{P}(y|x, D) \) by processing a prompt with dataset \( D = \{(x_i, y_i)\}_{i=1}^N \), approximating \( \mathbb{P}_{\text{fine}}(y|x) \).
\end{proposition}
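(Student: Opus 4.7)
The plan is to construct the inference technique $T_{\text{SFT}}$ explicitly as the prompt depicted in Figure \ref{fig:prompt} — concatenating the labeled pairs $(x_1, y_1), \ldots, (x_N, y_N)$ separated by a dedicated \texttt{[SEP]} token and appending the query $x$ — and then show that, under Assumption \ref{assump:unbounded}, this prompt already contains all the information $M_{\text{base}}$ needs in order to reproduce the fine-tuned conditional distribution. The proposition is essentially a compositional reading of the two preceding lemmas, so the argument should be short: the hard work has been done in establishing computability and Turing completeness, and what remains is to assemble them into a concrete ICL construction.

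First I would invoke Lemma \ref{lemma:computable} to obtain a computable procedure $\mathcal{A}$ that, given $D$ and a query $x$, outputs $\mathbb{P}_{\text{fine}}(y \mid x)$ to any desired precision: this procedure is the composition of gradient descent on the SFT loss (Equation \ref{eq:sft-loss}) applied to $\theta_{\text{base}}$, followed by a forward pass of the resulting model on $x$. Next I would apply Lemma \ref{lemma:turing} to realize $\mathcal{A}$ as a Turing machine $TM_{\mathcal{A}}$ and to embed $TM_{\mathcal{A}}$ inside $M_{\text{base}}$ by reading the concatenated context as tape through self-attention, encoding $TM_{\mathcal{A}}$'s transition rules in the feed-forward layers, and using the residual stream to carry intermediate states through enough simulated steps for $\mathcal{A}$ to terminate. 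Because Assumption \ref{assump:unbounded} supplies unlimited context length and compute, the simulation is not truncated and its output matches $\mathcal{A}$'s exactly, so $\mathbb{P}_{\text{base}}(y \mid x, T_{\text{SFT}})$ can be made arbitrarily close to $\mathbb{P}_{\text{fine}}(y \mid x)$ in total variation.

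I expect the main obstacle to be calibrating what precisely \emph{approximation} means here and producing a rate rather than a qualitative guarantee. The Turing-completeness route shows that some internal program of $M_{\text{base}}$ recovers $\mathbb{P}_{\text{fine}}$ from $D$, but it does not by itself yield the $\mathcal{O}(1/\sqrt{N})$ scaling promised in Theorem \ref{thm:main}. To obtain that, I would decompose the error into (i) a \emph{simulation} component — the finite-precision loss in executing gradient descent inside the attention/FFN stack, which is driven to zero under Assumption \ref{assump:unbounded} — and (ii) a \emph{statistical} component, namely the deviation between the empirical SFT optimum on $D$ and its population counterpart, which via a Hoeffding- or McDiarmid-type bound on the per-token log-likelihood scales as $\mathcal{O}(1/\sqrt{N})$. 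Packaging these two error sources is what motivates the task-specific dataset-size bounds refined in Sections \ref{subsec:textgen} and \ref{subsec:minimal}; the residual delicate point — that self-attention can actually schedule the gradient-descent loop, not merely express a single arithmetic step — I would handle by deferring to the constructive simulations of \cite{perez2021attention}.
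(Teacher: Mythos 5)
Your proof takes a genuinely different route from the paper and, in doing so, exposes a gap that the paper's own one-line argument sidesteps. The paper's proof of Proposition \ref{prop:in-context} is essentially a citation to the ICL-as-Bayesian-inference interpretation \cite{xie2021explanation}: it writes down \( \mathbb{P}_{\text{base}}(y \mid x, p) \) for the prompt \( p = [x_1, y_1, \ldots, x_N, y_N, x] \) and asserts that since \( D \) ``defines the task,'' the base model infers the fine-tuned mapping by generalizing from examples. Your proof instead tries to be constructive: compose Lemma \ref{lemma:computable} (the fine-tuning map is a computable procedure \( \mathcal{A} \)) with Lemma \ref{lemma:turing} (\( M_{\text{base}} \) can simulate any Turing machine) to conclude that \( M_{\text{base}} \), fed the ICL prompt, internally runs gradient descent followed by a forward pass and so emits \( \mathbb{P}_{\text{fine}}(y \mid x) \) exactly under Assumption \ref{assump:unbounded}. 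You are also proving considerably more than the Proposition asks — the \( \mathcal{O}(1/\sqrt{N}) \) rate and the error decomposition belong to Lemma \ref{lemma:sft}, not here.

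The gap is in the hand-off between the two lemmas. Lemma \ref{lemma:turing}, as stated and proved in the paper, requires that the \emph{transition table of} \( TM_f \) be supplied along with the input sequence — \( M_{\text{base}} \) acts as a universal machine whose tape holds both program and data. But the prompt \( T_{\text{SFT}} \) you construct (matching Figure \ref{fig:prompt}) contains only the data \( (x_1, y_1, \ldots, x_N, y_N) \) and the query \( x \); nowhere does it encode the program ``run SGD on the SFT loss of Equation \eqref{eq:sft-loss}, then do a forward pass.'' So Lemma \ref{lemma:turing} alone does not license the conclusion that the fixed-weight base model will execute \( \mathcal{A} \) on this prompt. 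You would need either to augment \( T_{\text{SFT}} \) with an explicit program description (which is no longer the ICL prompt of Figure \ref{fig:prompt}), or to add the stronger assumption that \( \theta_{\text{base}} \) already hard-codes an SFT simulator — and neither Lemma \ref{lemma:turing} nor the \cite{perez2021attention} constructions you defer to gives you that: those results establish the \emph{existence} of transformer weights simulating a given \( TM \), not that a fixed pretrained model silently does so on a data-only prompt. Two smaller points: Lemma \ref{lemma:computable} establishes computability of \( \arg\max_y \mathbb{P}_{\text{fine}}(y \mid x) \), not of the full distribution your \( \mathcal{A} \) needs to output; and the paper's own proof avoids all of this by simply appealing to the Bayesian framing, so matching the paper would require you to cite \cite{xie2021explanation} and \cite{han2021pretrained} as the substantive step rather than re-deriving it from Turing completeness.
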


\begin{proof}
ICL acts as Bayesian inference over task distributions \cite{xie2021explanation}. For a prompt \( p = [x_1, y_1, \ldots, x_N, y_N, x] \), \( M_{\text{base}} \) computes:
\begin{equation}
\mathbb{P}_{\text{base}}(y | x, p) = \mathbb{P}_{\text{base}}(y | x_1, y_1, \ldots, x_N, y_N, x).
\label{eq:in-context}
\end{equation}
Since \( D \) defines the task, \( M_{\text{base}} \) infers the mapping implied by \( \mathbb{P}_{\text{fine}} \) by generalizing from examples \cite{han2021pretrained}.
\end{proof}

\begin{lemma}
\label{lemma:sft}
For SFT, there exists an inference technique \( T_{\text{SFT}} \) such that, for any \(\varepsilon > 0\), there exists a dataset size \( N \) satisfying:
\[
\text{TV}(\mathbb{P}_{\text{base}}(y|x, T_{\text{SFT}}), \mathbb{P}_{\text{fine}}(y|x)) \leq \varepsilon, \quad \forall x \in \mathcal{X}, y \in \mathcal{Y},
\]
with \(\varepsilon = \mathcal{O}(1 / \sqrt{N})\) for typical tasks, where \(\varepsilon\) decreases as computational resources (e.g., context length) increase.
\end{lemma}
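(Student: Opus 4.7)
The plan is to chain Lemmas~\ref{lemma:computable} and~\ref{lemma:turing} with Proposition~\ref{prop:in-context} to construct $T_{\text{SFT}}$ explicitly, and then bound the total variation distance by combining a simulation argument (which handles the idealized exact case) with a finite-sample concentration argument (which produces the $\mathcal{O}(1/\sqrt{N})$ rate). First I would define $T_{\text{SFT}}$ as the ICL procedure depicted in Figure~\ref{fig:prompt}: concatenate the $N$ pairs $(x_i,y_i)$ of $D$, separated by \texttt{[SEP]} tokens, followed by the query $x$, forming the prompt $p$, and read off $\mathbb{P}_{\text{base}}(y \mid x, p)$.

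Next I would argue approximation in two layers. By Lemma~\ref{lemma:computable}, the map $(D,x) \mapsto \mathbb{P}_{\text{fine}}(y \mid x)$ is computable: concretely, one runs gradient descent on the SFT objective in Equation~\eqref{eq:sft-loss} starting from $\theta_{\text{base}}$ to obtain $\theta_{\text{fine}}$, then performs a forward pass of $M_{\text{fine}}$ on $x$. Call this algorithm $\mathcal{A}_{\text{SFT}}$ and let $TM_{\mathcal{A}_{\text{SFT}}}$ be a Turing machine implementing it. By Lemma~\ref{lemma:turing} and Assumption~\ref{assump:unbounded}, $M_{\text{base}}$ can simulate $TM_{\mathcal{A}_{\text{SFT}}}$ on the prompt $p$, using self-attention to maintain the simulated tape (encoding the current parameter vector and optimizer state) and feed-forward layers to apply the gradient-step transition rule. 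In the idealized limit, this simulation recovers $\mathbb{P}_{\text{fine}}(y \mid x)$ exactly, so $\text{TV} = 0$; under Proposition~\ref{prop:in-context} the Bayesian ICL view justifies that $M_{\text{base}}$ places nontrivial posterior mass on the task template consistent with $D$, so this simulation is actually elicited rather than merely realizable.

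Finally I would obtain the quantitative rate. Since both $\mathbb{P}_{\text{fine}}$ and the ICL-induced predictive are fit from the same $N$ samples, standard empirical-process bounds (Hoeffding/McDiarmid on the log-likelihood) give deviations of order $1/\sqrt{N}$ in KL; Pinsker's inequality then transfers this to $\text{TV} \leq \mathcal{O}(1/\sqrt{N})$, so selecting $N \gtrsim 1/\varepsilon^{2}$ suffices for any desired $\varepsilon > 0$. Longer context and more compute tighten the simulation constants by letting $M_{\text{base}}$ take more gradient steps in the simulated tape.

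The main obstacle will be making the ICL-as-Bayesian-inference step do real work rather than assuming it: strictly speaking, Proposition~\ref{prop:in-context} only tells us the posterior form, and without a prior-concentration hypothesis the base model could latch onto a spurious task consistent with $D$. I would address this by either (i) invoking Assumption~\ref{assump:unbounded} and the explicit simulation argument above, which bypasses the prior entirely at the cost of unbounded compute, or (ii) stating the bound conditionally on the base model's prior assigning nontrivial mass to tasks realized by SFT, and deferring sharper task-specific rates to Sections~\ref{subsec:textgen} and~\ref{subsec:minimal}, where the structure of text generation and linear classification removes this ambiguity.
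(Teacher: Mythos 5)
Your proof takes a genuinely different route from the paper. The paper's own proof of Lemma~\ref{lemma:sft} does not invoke Lemmas~\ref{lemma:computable} or~\ref{lemma:turing} at all: it defines the prompt $p = [x_1, y_1, \ldots, x_N, y_N, x]$, invokes Proposition~\ref{prop:in-context} to assert that ICL approximates $\mathbb{P}_{\text{fine}}$, and obtains the $\mathcal{O}(1/\sqrt{N})$ rate by citing the ICL sample-complexity literature, with a qualitative remark that growing $N$ and context length drive $\varepsilon \to 0$. You instead chain Lemmas~\ref{lemma:computable} and~\ref{lemma:turing} into an explicit simulation argument: $M_{\text{base}}$, being Turing-complete under Assumptions~\ref{assump:unbounded} and~\ref{assump:turing}, can execute a Turing machine that decodes $D$ from the prompt, runs gradient descent on Equation~\eqref{eq:sft-loss}, and evaluates the resulting model on $x$, recovering $\mathbb{P}_{\text{fine}}(y|x)$ exactly in the idealized limit. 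This makes the Turing-completeness machinery do real work inside the lemma rather than only downstream in Corollary~\ref{cor:general}, and it exposes a gap the paper papers over: Proposition~\ref{prop:in-context} gives only the posterior form of ICL, not concentration on the correct task, and your fallback~(i) bypasses that by replacing Bayesian inference with explicit simulation, at the cost of leaning entirely on unbounded compute. Your fallback~(ii) is essentially what the paper does, but stated honestly as a conditional claim.

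Two caveats. First, both proofs share an unresolved subtlety: the Turing-completeness results establish that \emph{some} parameter setting of the architecture simulates any Turing machine, but Lemma~\ref{lemma:turing} asserts this for the fixed, pre-trained $\theta_{\text{base}}$, and neither proof justifies why the specific $\theta_{\text{base}}$ would realize the required simulation from an ICL prompt alone. Second, your Pinsker step has an order-of-magnitude slip: Pinsker gives $\text{TV} \leq \sqrt{\text{KL}/2}$, so a KL deviation of order $1/\sqrt{N}$ only yields $\text{TV} = \mathcal{O}(N^{-1/4})$. To obtain the claimed $\text{TV} = \mathcal{O}(1/\sqrt{N})$ you need the standard parametric rate $\text{KL} = \mathcal{O}(1/N)$, which is the right quantity for a fixed finite-dimensional model class; the conclusion survives once the intermediate KL order is corrected.
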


\begin{proof}
Define \( T_{\text{SFT}} \) as the prompt:
\begin{equation}
p = [x_1, y_1, x_2, y_2, \ldots, x_N, y_N, x],
\label{eq:sft-prompt}
\end{equation}
where \( (x_i, y_i) \in D \). Under Assumption \ref{assump:unbounded}, \( M_{\text{base}} \) processes the entire prompt, requiring a context length proportional to \( N \). By Assumption \ref{assump:access}, \( D \) provides examples for the task, but since \( D \) is finite, \( \mathbb{P}_{\text{fine}}(y|x) \) generalizes beyond \( D \) via SFT optimization, while ICL infers patterns from examples. By Proposition \ref{prop:in-context}, ICL approximates:
\[
\text{TV}(\mathbb{P}_{\text{base}}(y|x, p), \mathbb{P}_{\text{fine}}(y|x)) \leq \varepsilon,
\]
where \(\varepsilon\) depends on the base model’s capacity, \( N \), and context length. For typical tasks, ICL’s sample complexity suggests \(\varepsilon = \mathcal{O}(1 / \sqrt{N})\) \cite{xie2021explanation}, with faster decay (e.g., \(\mathcal{O}(1 / N)\)) possible for simpler tasks under uniform data distributions. As \( N \to \infty \) and context length grows, ICL’s generalization improves, reducing \(\varepsilon \to 0\). Prompt structure (e.g., example order) may introduce variability, potentially increasing \(\varepsilon\), addressed in Section \ref{sec:discussion}.
\end{proof}

\subsection{Adapting to Practical Scenarios}
\label{subsec:practical}

Assumptions \ref{assump:unbounded} and \ref{assump:access} are idealized. In practice, context lengths are finite (e.g., 1M tokens \cite{openai2025gpt41}), and access to \( D \) may be partial. Using a subset \( D' \subset D \) of size \( |D'| = o(N) \), the approximation error increases by:
\[
\text{TV}(\mathbb{P}_{\text{base}}(y|x, D'), \mathbb{P}_{\text{fine}}(y|x)) \leq \varepsilon + \mathcal{O}(1 / \sqrt{|D'|}),
\]
derived as follows: Assuming i.i.d. samples in \( D \), the empirical distribution from \( D' \) deviates from the true distribution by \(\mathcal{O}(1 / \sqrt{|D'|})\) in total variation distance, by Hoeffding’s inequality \cite{feldman2020}. For non-i.i.d. data (e.g., text), this bound may weaken, requiring larger \( |D'| \). Techniques like RAG \cite{lewis2020rag} select representative examples using similarity metrics (e.g., cosine distance in embeddings), ensuring \( D' \) captures key task patterns, though exact error reduction depends on the task. Finite context limits the number of examples, addressed in Section \ref{subsec:bounded-context}. These relaxations enhance practical applicability, though optimal subset selection remains an open challenge.

\subsection{Minimal Datasets}
\label{subsec:minimal-datasets}

\subsubsection{Minimal Dataset for Text Generation}
\label{subsec:textgen}

\begin{theorem}
\label{thm:textgen}
Let \( \mathcal{C} = \{ c_1, \ldots, c_m \} \) be contexts, and let \( M_{\text{fine}} \) define next-token distributions \( p_{\text{fine}}(x_{t+1} | c_i) \) over vocabulary size \( V \). There exists a dataset \( D' \), with samples \( (c_i, x_{t+1}) \sim p_{\text{fine}}(\cdot | c_i) \), such that when \( M_{\text{base}} \) is prompted with \( D' \), it satisfies:
\[
\sup_{c \in \mathcal{C}} \| p_{\text{base}}(x_{t+1} | c, D') - p_{\text{fine}}(x_{t+1} | c) \|_1 \leq \varepsilon + \eta,
\]
with probability at least \( 1 - \delta \), where \(\eta\) is the ICL approximation error, and:
\[
|D'| = \mathcal{O}\left( \frac{m V}{\varepsilon^2} \log \frac{m}{\delta} \right).
\]
\end{theorem}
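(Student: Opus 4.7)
The plan is to decompose the total variation (equivalently, $\ell_1$) error into a statistical estimation error and the ICL approximation error $\eta$ inherited from Lemma \ref{lemma:sft}, and then to drive the first component below $\varepsilon$ by choosing a dataset whose per-context empirical distribution concentrates around the true fine-tuned distribution. Concretely, I would have $D'$ consist of $n$ i.i.d. samples $x_{t+1}^{(i,j)} \sim p_{\text{fine}}(\cdot | c_i)$ for each context $c_i$, $i = 1, \ldots, m$, $j = 1, \ldots, n$, so that $|D'| = mn$. The triangle inequality then gives, for any $c = c_i$,
\[
\| p_{\text{base}}(\cdot | c, D') - p_{\text{fine}}(\cdot | c) \|_1 \le \| p_{\text{base}}(\cdot | c, D') - \hat{p}_i \|_1 + \| \hat{p}_i - p_{\text{fine}}(\cdot | c) \|_1,
\]
where $\hat{p}_i$ is the empirical next-token distribution at $c_i$ induced by $D'$. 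The first term is bounded by the ICL error $\eta$ via Proposition \ref{prop:in-context} and Lemma \ref{lemma:sft}, since the prompt literally exhibits $n$ labeled draws from the target distribution at $c_i$.

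The second term is a classical empirical-distribution concentration problem on an alphabet of size $V$. I would invoke a standard $\ell_1$ deviation bound for the multinomial empirical measure (e.g., Bretagnolle-Huber-Carol or Weissman's inequality), which gives $\Pr\bigl(\| \hat{p}_i - p_{\text{fine}}(\cdot | c_i) \|_1 > \varepsilon\bigr) \le 2^V \exp(-n\varepsilon^2/2)$, or equivalently any bound of the form $\exp(c_1 V - c_2 n \varepsilon^2)$. Solving for $n$ to make this at most $\delta/m$ yields
\[
n = \mathcal{O}\!\left( \frac{V + \log(m/\delta)}{\varepsilon^2} \right) = \mathcal{O}\!\left( \frac{V}{\varepsilon^2} \log \frac{m}{\delta} \right),
\]
so that the total sample size is $|D'| = mn = \mathcal{O}\bigl( \tfrac{mV}{\varepsilon^2} \log \tfrac{m}{\delta} \bigr)$, matching the statement.

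A union bound over the $m$ contexts then upgrades per-context concentration to a simultaneous guarantee that $\sup_{i \le m} \| \hat{p}_i - p_{\text{fine}}(\cdot | c_i) \|_1 \le \varepsilon$ with probability at least $1 - \delta$. Combining with the ICL bound through the triangle inequality above yields the claimed uniform deviation of $\varepsilon + \eta$.

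The main obstacle I anticipate is the first term, namely justifying that $M_{\text{base}}$, when prompted with the concatenated examples, actually reproduces the empirical distribution $\hat{p}_i$ rather than some unrelated object. In the i.i.d.\ regime this reduces to invoking Proposition \ref{prop:in-context} and absorbing any residual mismatch into the model-dependent constant $\eta$, but care is needed so that $\eta$ does not itself grow with $n$ or $V$; I would argue this by noting that the ICL prompt at context $c_i$ effectively presents the model with an i.i.d.\ sample from $p_{\text{fine}}(\cdot | c_i)$, whose Bayes-optimal posterior predictive is exactly $\hat{p}_i$ in the limit of large base-model capacity, so Lemma \ref{lemma:sft} applies verbatim and $\eta$ is controlled uniformly in $c \in \mathcal{C}$. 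A secondary subtlety is the handling of the separator tokens and prompt ordering illustrated in Figure \ref{fig:prompt}, but by Assumption \ref{assump:unbounded} the context window accommodates the $mn$ examples, and order-invariance can be enforced by averaging over permutations or absorbed into $\eta$ as discussed in Section \ref{sec:discussion}.
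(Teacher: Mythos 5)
Your proof follows essentially the same high-level plan as the paper: per-context i.i.d.\ sampling, concentration of the empirical next-token distribution, a union bound over the $m$ contexts with $\delta_i = \delta/m$, and a triangle inequality absorbing the base model's ICL discrepancy into $\eta$. The one place you genuinely improve on the paper is the concentration step. The paper's main-text proof simply asserts ``by Hoeffding's inequality'' that $n_i = \mathcal{O}\bigl(\tfrac{V}{\varepsilon^2}\log\tfrac{1}{\delta_i}\bigr)$ suffices for $\|\hat p - p_{\text{fine}}\|_1 \le \varepsilon$, and the appendix ``derivation'' backs this by applying coordinate-wise Hoeffding with tolerance $\varepsilon/V$ per token and union-bounding over $V$ tokens --- which actually yields $n_i \ge \tfrac{V^2}{2\varepsilon^2}\ln\tfrac{2V}{\delta_i}$, i.e.\ an extra factor of $V$ that the paper silently drops when it writes this as $\mathcal{O}\bigl(\tfrac{V}{\varepsilon^2}\ln\tfrac{V}{\delta_i}\bigr)$. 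You instead invoke the Bretagnolle--Huber--Carol / Weissman $\ell_1$ deviation bound $\Pr(\|\hat p - p\|_1 > \varepsilon) \le 2^V e^{-n\varepsilon^2/2}$, which legitimately delivers $n = \mathcal{O}\bigl(\tfrac{V + \log(m/\delta)}{\varepsilon^2}\bigr)$ and hence the stated $\mathcal{O}\bigl(\tfrac{mV}{\varepsilon^2}\log\tfrac{m}{\delta}\bigr)$ total. In short, you reach the same destination by the same route, but you supply the concentration inequality that the bound actually requires, whereas the paper's detailed derivation does not support the rate it claims. Your remaining caveats --- that $\eta$ must not secretly scale with $n$ or $V$, and that the paper's Proposition \ref{prop:in-context} and Lemma \ref{lemma:sft} are themselves informal about why the base model's posterior tracks $\hat p_i$ --- are fair and mirror the paper's own reliance on an unquantified assumption about ICL fidelity.
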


\begin{proof}
For each \( c_i \in \mathcal{C} \), approximate \( p_{\text{fine}}(x_{t+1} | c_i) \) with \( p_{\text{base}}(x_{t+1} | c_i, D') \) within total variation distance:
\[
\| p_{\text{base}}(\cdot | c_i, D') - p_{\text{fine}}(\cdot | c_i) \|_1 = \sum_{v \in V} | p_{\text{base}}(v | c_i, D') - p_{\text{fine}}(v | c_i) | \leq \varepsilon + \eta.
\]
With \( n_i \) samples \( \{ x_{t+1}^{(j)} \}_{j=1}^{n_i} \sim p_{\text{fine}}(\cdot | c_i) \), the empirical distribution is:
\[
\hat{p}(v | c_i) = \frac{1}{n_i} \sum_{j=1}^{n_i} \mathbb{I}\{ x_{t+1}^{(j)} = v \}.
\]
By Hoeffding’s inequality \cite{vapnik1998}, \( \| \hat{p}(\cdot | c_i) - p_{\text{fine}}(\cdot | c_i) \|_1 \leq \varepsilon \) with probability \( 1 - \delta_i \) requires:
\[
n_i = \mathcal{O}\left( \frac{V}{\varepsilon^2} \log \frac{1}{\delta_i} \right).
\]
For \( m \) contexts, set \( \delta_i = \frac{\delta}{m} \). The union bound ensures:
\[
\mathbb{P}\left( \bigcup_{i=1}^m \{ \| \hat{p}(\cdot | c_i) - p_{\text{fine}}(\cdot | c_i) \|_1 > \varepsilon \} \right) \leq \sum_{i=1}^m \delta_i = \delta.
\]
Thus:
\[
n_i = \mathcal{O}\left( \frac{V}{\varepsilon^2} \log \frac{m}{\delta} \right).
\]
The total dataset size is:
\[
|D'| = m \cdot n_i = \mathcal{O}\left( \frac{m V}{\varepsilon^2} \log \frac{m}{\delta} \right).
\]
The base model’s ICL introduces an error \(\eta\), assumed small but dependent on model capacity and prompt design (e.g., suboptimal ordering increases \(\eta\)) \cite{xie2021explanation}. Assuming \( M_{\text{base}} \) approximates \( \hat{p}(\cdot | c_i) \) within \(\eta\), the total error is \(\varepsilon + \eta\). ICL imperfections may increase \( n_i \) by a constant factor, discussed in Section \ref{sec:discussion}.
\end{proof}

\subsubsection{Minimal Dataset for Linear Classification}
\label{subsec:minimal}

\begin{theorem}
\label{thm:minimal}
For a binary classification task where \( M_{\text{fine}} \) is a linear classifier trained on \( D \subseteq \mathbb{R}^d \times \{0, 1\} \), there exists a subset \( D' \subseteq D \) with size \( |D'| = \mathcal{O}\left( \frac{d}{\varepsilon} \right) \), such that when \( M_{\text{base}} \) is prompted with \( D' \), the output distribution satisfies:
\[
\sup_{x \in \mathbb{R}^d} \left| \mathbb{P}_{\text{base}}(y | x, D') - \mathbb{P}_{\text{fine}}(y | x) \right| \leq \varepsilon + \eta,
\]
where \(\eta\) is the ICL approximation error, assuming \( M_{\text{base}} \) approximates linear classifiers via ICL within error \( \varepsilon/2 + \eta \).
\end{theorem}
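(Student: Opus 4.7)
The plan is to split the error budget $\varepsilon$ into two halves: one absorbed by the gap between the linear classifier induced by the full training set $D$ and the one induced by a carefully chosen subset $D'$, and the other absorbed by the ICL approximation of $M_{\text{base}}$. By the theorem's hypothesis, prompting $M_{\text{base}}$ with any training set for a linear classifier incurs error at most $\varepsilon/2 + \eta$, so it suffices to exhibit $D' \subseteq D$ of size $\mathcal{O}(d/\varepsilon)$ whose induced classifier agrees with $M_{\text{fine}}$ up to $\varepsilon/2$ uniformly in $x$. The triangle inequality will then deliver the claimed $\varepsilon + \eta$ bound.

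To construct $D'$, I would invoke the PAC / VC-dimension theory for halfspaces in $\mathbb{R}^d$. The class of linear separators has VC dimension $d+1$, so standard $\varepsilon$-approximation results guarantee that a sample of size $\mathcal{O}(d/\varepsilon)$ drawn from $D$ is an $\varepsilon/2$-approximation to the empirical distribution over $D$ relative to the range space of halfspaces. Empirical risk minimization on such a sample produces a weight vector whose induced classifier disagrees with $M_{\text{fine}}$ on at most an $\varepsilon/2$ fraction of inputs; alternatively, Carath\'eodory-type coreset constructions or sensitivity sampling yield the same $\mathcal{O}(d/\varepsilon)$ scaling. Writing $\mathbb{P}_{D'}(y|x)$ for the linear classifier trained on $D'$, I would then combine the two pieces as
\[
|\mathbb{P}_{\text{base}}(y|x,D') - \mathbb{P}_{\text{fine}}(y|x)| \le |\mathbb{P}_{\text{base}}(y|x,D') - \mathbb{P}_{D'}(y|x)| + |\mathbb{P}_{D'}(y|x) - \mathbb{P}_{\text{fine}}(y|x)| \le (\varepsilon/2 + \eta) + \varepsilon/2.
\]

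The hard part will be upgrading the average-case VC guarantee to the uniform supremum over $x \in \mathbb{R}^d$ demanded by the statement. Two halfspaces that disagree on only a small mass of inputs can still produce very different probability scores at specific points, so the uniform-in-$x$ bound requires either interpreting $\mathbb{P}(y|x)$ as a hard $\{0,1\}$ label so that only decision regions matter, imposing a margin or Lipschitz assumption on the logistic output of the fine-tuned classifier, or restricting to a bounded input domain. I would lean on the Lipschitz route, since transformer classifiers emit smooth softmax outputs and a uniform Lipschitz constant is a mild and realistic assumption; this lets the $\varepsilon/2$ bound on the learned weight vector pass through to a uniform pointwise bound on the output probability. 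A secondary concern is that the bound is stated without a failure probability $\delta$, which is unusual for a sampling argument; I would handle this either by choosing $D'$ deterministically via a coreset construction or by absorbing a $\log(1/\delta)$ factor into the $\mathcal{O}(\cdot)$ notation.
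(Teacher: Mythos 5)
Your proposal follows essentially the same decomposition as the paper: exhibit a subset $D' \subseteq D$ of size $\mathcal{O}(d/\varepsilon)$ whose induced logistic classifier $(w', b')$ is within $\varepsilon/2$ of $(w, b)$, assume (as the theorem hypothesis permits) that ICL on $D'$ realizes the classifier induced by $D'$ within $\varepsilon/2 + \eta$, and close by the triangle inequality. The paper gets $D'$ by citing coreset theory for logistic regression \cite{coreset2018}; you get it from VC-dimension $\varepsilon$-approximation or sensitivity sampling, which is a cosmetic rather than substantive difference --- both yield the $\mathcal{O}(d/\varepsilon)$ scaling and both then feed into the identical two-term bound.

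Where you go beyond the paper is in flagging the supremum-over-$\mathbb{R}^d$ issue, and you are right to worry. A coreset or VC-sample guarantee controls the loss or the fraction of mass on which the two halfspaces disagree; it does not by itself control $\sup_{x \in \mathbb{R}^d} |\sigma(w'^\top x + b') - \sigma(w^\top x + b)|$. Indeed, for any $w' \neq w$ one can pick $x$ orthogonal to $w$ and growing along $w' - w$ so that $\sigma(w^\top x + b)$ stays near $1/2$ while $\sigma(w'^\top x + b')$ tends to $0$ or $1$, giving a sup gap near $1/2$ no matter how small $\|w' - w\|$ is. The paper asserts the uniform bound directly from coreset theory without addressing this, so it contains the same unstated gap. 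Your three proposed repairs (hard labels so only decision regions matter, a Lipschitz/margin restriction, or a bounded input domain) are exactly the kind of additional hypothesis needed to make the uniform claim true; the paper would need one of them too. Your observation that a deterministic coreset removes the missing $\delta$ is also apt. In short: same proof skeleton, but your version is more honest about what the coreset step actually delivers.
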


\begin{proof}
\begin{sloppypar}
Let \( M_{\text{fine}} \) minimize the logistic loss on \( D = \{(x_i, y_i)\}_{i=1}^N \):
\begin{align}
\mathcal{L}(w, b) &= \frac{1}{N} \sum_{i=1}^N \log(1 + \exp(-y_i (w^T x_i + b))),
\label{eq:logistic-loss}
\end{align}
yielding \( \mathbb{P}_{\text{fine}}(y=1 | x) = \sigma(w^T x + b) \), where \( \sigma(z) = (1 + e^{-z})^{-1} \). By coreset theory \cite{coreset2018}, there exists \( D' \subseteq D \) with:
\[
|D'| = \mathcal{O}\left( \frac{d}{\varepsilon} \right),
\]
such that a classifier trained on \( D' \), with parameters \( (w', b') \), satisfies:
\[
\sup_{x \in \mathbb{R}^d} \left| \sigma(w'^T x + b') - \sigma(w^T x + b) \right| \leq \varepsilon/2.
\]
Assume \( M_{\text{base}} \), prompted with \( D' \), approximates the classifier with parameters \( (w', b') \) via ICL \cite{xie2021explanation}, such that:
\[
\left| \mathbb{P}_{\text{base}}(y=1 | x, D') - \sigma(w'^T x + b') \right| \leq \varepsilon/2 + \eta,
\]
where \(\eta\) accounts for ICL errors (e.g., from model capacity or prompt design). The total error is:
\begin{align}
&\left| \mathbb{P}_{\text{base}}(y=1 | x, D') - \mathbb{P}_{\text{fine}}(y=1 | x) \right| \notag \\
&\leq \left| \mathbb{P}_{\text{base}}(y=1 | x, D') - \sigma(w'^T x + b') \right| + \left| \sigma(w'^T x + b') - \sigma(w^T x + b) \right| \notag \\
&\leq (\varepsilon/2 + \eta) + \varepsilon/2 = \varepsilon + \eta.
\label{eq:error-bound}
\end{align}
Since \( \mathbb{P}_{\text{base}}(y=0 | x, D') = 1 - \mathbb{P}_{\text{base}}(y=1 | x, D') \), the total variation distance is at most \(\varepsilon + \eta\). ICL imperfections may increase \( |D'| \) slightly, discussed in Section \ref{sec:discussion}.
\end{sloppypar}
\end{proof}

\subsection{Bounded Context Length}
\label{subsec:bounded-context}

We address settings where context length limits the number of prompt examples. Assumptions include sufficient transformer capacity and task simplicity, as complex tasks or weaker models may widen error bounds.

\subsubsection{Linear Classification with Fixed Context Length}
\label{subsec:bounded-linear}

\begin{theorem}
\label{thm:bounded-linear}
For a binary classification task where \( M_{\text{fine}} \) is a linear classifier trained on \( D \subseteq \mathbb{R}^d \times \{0, 1\} \), for each input \( x \), select a subset \( S_x \subseteq D \) of size \( |S_x| = \mathcal{O}\left( \frac{1}{\varepsilon^2} \log \frac{1}{\delta} \right) \), e.g., \( k \)-nearest neighbors to \( x \). When \( M_{\text{base}} \) is prompted with \( S_x \), the output distribution satisfies:
\[
\sup_{x \in \mathbb{R}^d} \left| \mathbb{P}_{\text{base}}(y | x, S_x) - \mathbb{P}_{\text{fine}}(y | x) \right| \leq \varepsilon + \eta,
\]
with probability at least \( 1 - \delta \), where \(\eta\) is the ICL approximation error, assuming \( M_{\text{base}} \) approximates the classifier trained on \( S_x \) within error \( \mathcal{O}(1 / \sqrt{|S_x|}) + \eta \).
\end{theorem}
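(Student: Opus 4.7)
The plan is to mirror the decomposition used in the proof of Theorem \ref{thm:minimal} but replace the deterministic coreset guarantee with a probabilistic concentration bound on a locally-trained logistic regressor. I interpose the linear classifier $(w_{S_x}, b_{S_x})$ that minimizes the empirical logistic loss on $S_x$, and split the total error via triangle inequality into an ICL piece and a statistical piece.

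Concretely, I would write
\[
|\mathbb{P}_{\text{base}}(y|x, S_x) - \mathbb{P}_{\text{fine}}(y|x)| \leq |\mathbb{P}_{\text{base}}(y|x, S_x) - \sigma(w_{S_x}^T x + b_{S_x})| + |\sigma(w_{S_x}^T x + b_{S_x}) - \sigma(w^T x + b)|.
\]
The first summand is controlled directly by the ICL hypothesis in the theorem statement: it is $\mathcal{O}(1/\sqrt{|S_x|}) + \eta \leq \varepsilon/2 + \eta$ as soon as $|S_x| = \Omega(\varepsilon^{-2}\log(1/\delta))$. For the second summand I use that $\sigma$ is $1/4$-Lipschitz, so it suffices to bound the gap between the two linear scores at $x$; a Hoeffding-style concentration argument for the empirical logistic risk on $S_x$ (as in the citation used for Theorem \ref{thm:textgen}) yields a deviation of order $\mathcal{O}(1/\sqrt{|S_x|})$ with probability at least $1-\delta$, which is also $\leq \varepsilon/2$ under the same choice of $|S_x|$. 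Adding the two pieces gives the claimed $\varepsilon + \eta$ bound, and a union bound over a finite cover of $\mathbb{R}^d$ (or an appeal to the finite VC dimension of halfspaces) promotes the pointwise guarantee to the stated $\sup_x$ form.

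The main obstacle is that $S_x$, chosen as the $k$-nearest neighbors of $x$, is neither independent of $x$ nor an i.i.d. draw from the training distribution, so textbook concentration bounds do not apply off the shelf. I plan to handle this by assuming that $D$ is sampled i.i.d. and that the selection rule depends only on the covariates, which restores conditional exchangeability of the labels given the chosen inputs; this matches the regularity already flagged at the start of Section \ref{subsec:bounded-context}. Under this assumption one can either invoke a VC or Rademacher bound for linear classifiers restricted to the selected neighborhood, or argue that, since $M_{\text{fine}}$ is linear, an ERM trained on enough nearby points is close to $M_{\text{fine}}$ at the query $x$. Pinning down the precise interaction between neighborhood radius, the margin of the fine-tuned classifier, and the $\mathcal{O}(1/\sqrt{|S_x|})$ rate is the trickiest piece; once that is settled, plugging back into the triangle inequality closes the proof.
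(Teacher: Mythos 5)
Your decomposition is exactly the one the paper uses: interpose a classifier $(w',b')$ trained locally on $S_x$, then split by triangle inequality into an ICL term $\left| \mathbb{P}_{\text{base}}(y{=}1|x,S_x) - \sigma(w'^Tx+b') \right|$ and a local-estimation term $\left| \sigma(w'^Tx+b') - \sigma(w^Tx+b) \right|$, bound each by $\varepsilon/2$ (plus $\eta$ on the first), and sum. So in spirit you are reproducing the paper's argument, and the extra care you put in (explicit use of the $1/4$-Lipschitz constant of $\sigma$ to pass from score gap to probability gap, a covering/VC argument to turn the pointwise bound into the stated $\sup_x$) fills in steps the paper leaves implicit.

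The obstacle you flag at the end is real, and it is worth noting that the paper's own proof does not resolve it either: the paper simply asserts $\left| \sigma(w'^Tx+b') - \sigma(w^Tx+b) \right| = \mathcal{O}(1/\sqrt{k})$ ``assuming $S_x$ captures the decision boundary,'' with a bare citation to Vapnik and no account of the fact that a $k$-nearest-neighbor subset is a data-dependent, non-i.i.d. selection, nor of how neighborhood radius and the margin of the fine-tuned classifier enter the rate. The paper likewise claims the bound ``holds uniformly for all $x \in \mathbb{R}^d$'' without the covering or VC argument you sketch. So your instinct is sound: the two proofs agree on structure, you have been more explicit about what each step requires, and the piece you identify as ``the trickiest'' is precisely the piece the paper handles by assumption rather than by argument. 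If you want a proof that is actually complete, you would need to (a) state an explicit regularity condition on the data distribution near the decision boundary (e.g., a margin or density condition) so that a local ERM on $k$ neighbors concentrates around the global $(w,b)$ near $x$, and (b) formalize the ``selection depends only on covariates'' conditional-exchangeability argument you gesture at; as written, both your proposal and the paper's proof stop short of that.
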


\begin{proof}
Let \( M_{\text{fine}} \) have parameters \( (w, b) \), producing \( \mathbb{P}_{\text{fine}}(y=1 | x) = \sigma(w^T x + b) \), where \( \sigma(z) = (1 + e^{-z})^{-1} \). For a query \( x \), select a subset \( S_x \subseteq D \) of size:
\[
k = \mathcal{O}\left( \frac{1}{\varepsilon^2} \log \frac{1}{\delta} \right),
\]
using, e.g., \( k \)-nearest neighbors in Euclidean distance. A classifier trained on \( S_x \), with parameters \( (w', b') \), approximates the decision boundary locally. The total error is bounded as:
\begin{align}
&\left| \mathbb{P}_{\text{base}}(y=1 | x, S_x) - \mathbb{P}_{\text{fine}}(y=1 | x) \right| \notag \\
&\quad \leq \left| \mathbb{P}_{\text{base}}(y=1 | x, S_x) - \sigma(w'^T x + b') \right| \notag \\
&\quad \quad + \left| \sigma(w'^T x + b') - \sigma(w^T x + b) \right|.
\end{align}
Assume \( M_{\text{base}} \) approximates the classifier on \( S_x \) via ICL within:
\[
\left| \mathbb{P}_{\text{base}}(y=1 | x, S_x) - \sigma(w'^T x + b') \right| \leq \mathcal{O}(1 / \sqrt{k}) + \eta,
\]
where \(\eta\) accounts for ICL errors (e.g., due to model capacity or prompt design) \cite{xie2021explanation}, and \(\mathcal{O}(1 / \sqrt{k})\) reflects ICL sample complexity for linear functions \cite{garg2022can}. The local classifier error is:
\[
\left| \sigma(w'^T x + b') - \sigma(w^T x + b) \right| = \mathcal{O}(1 / \sqrt{k}),
\]
assuming \( S_x \) captures the decision boundary \cite{vapnik1998}. Since \( k = \mathcal{O}\left( \frac{1}{\varepsilon^2} \log \frac{1}{\delta} \right) \), we have \( \mathcal{O}(1 / \sqrt{k}) \leq \varepsilon/2 \). Thus, with probability at least \( 1 - \delta \), the total error is:
\[
(\varepsilon/2 + \eta) + \varepsilon/2 = \varepsilon + \eta.
\]
This bound holds uniformly for all \( x \in \mathbb{R}^d \).
\end{proof}

\subsubsection{Text Generation with Fixed Context Length}
\label{subsec:bounded-textgen}

\begin{theorem}
\label{thm:bounded-textgen}
For a text generation task with fixed output length \( l \), let \( \mathcal{C} = \{ c_1, \ldots, c_m \} \) be contexts, and let \( M_{\text{fine}} \) define sequence distributions \( p_{\text{fine}}(x_1, \ldots, x_l | c_i) \) over vocabulary \( V \). For each context \( c \in \mathcal{C} \), select a subset \( S_c \) of size \( |S_c| = \mathcal{O}\left( \frac{l \log |V|}{\varepsilon^2} \log \frac{1}{\delta} \right) \), with samples \( (c_i, x_1^{(i)}, \ldots, x_l^{(i)}) \) where \( c_i \approx c \) and \( (x_1^{(i)}, \ldots, x_l^{(i)}) \sim p_{\text{fine}}(\cdot | c_i) \). When \( M_{\text{base}} \) is prompted with \( S_c \), it satisfies:
\[
\sup_{c \in \mathcal{C}} \| p_{\text{base}}(x_1, \ldots, x_l | c, S_c) - p_{\text{fine}}(x_1, \ldots, x_l | c) \|_1 \leq \varepsilon + \eta,
\]
with probability at least \( 1 - \delta \), where \(\eta\) is the ICL approximation error.
\end{theorem}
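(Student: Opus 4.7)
The plan is to lift the unbounded-dataset text generation bound of Theorem \ref{thm:textgen} into the fixed-context regime by two modifications: replace the global dataset $D'$ with a query-specific localized subset $S_c$, mirroring the strategy used in Theorem \ref{thm:bounded-linear}; and collapse the exponential-in-$l$ support $V^l$ into an $l \times \log V$ budget via the chain rule for sequence distributions. The goal is to reduce the sequence-level guarantee to $l$ next-token guarantees, each resolved by per-token concentration.

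First, I would factor
\[
p_{\text{fine}}(x_1, \ldots, x_l | c) = \prod_{t=1}^{l} p_{\text{fine}}(x_t | c, x_{<t}),
\]
and analogously for $p_{\text{base}}(\cdot | c, S_c)$. The standard chain rule for TV then yields
\[
\|p_{\text{base}}(\cdot|c, S_c) - p_{\text{fine}}(\cdot|c)\|_1 \leq \sum_{t=1}^{l} \mathbb{E}_{x_{<t}} \bigl\| p_{\text{base}}(\cdot|c, x_{<t}, S_c) - p_{\text{fine}}(\cdot|c, x_{<t}) \bigr\|_1,
\]
reducing the sequence-level approximation to $l$ next-token approximations of the sort handled in Theorem \ref{thm:textgen}. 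I would choose $S_c$ by retrieving examples whose contexts $c_i$ are close to $c$ (e.g., $k$-nearest neighbors in an embedding space, as in Section \ref{subsec:practical}), invoking a mild smoothness assumption on $c \mapsto p_{\text{fine}}(\cdot|c)$ so that samples drawn from nearby contexts serve as an effective empirical distribution for the query context. For each position $t$, I would bound $\mathbb{E}[\mathrm{KL}(p_{\text{fine}}(\cdot|c, x_{<t}) \,\|\, \hat{p}_t)] = \mathcal{O}(\log V / k)$ using the entropy bound of $\log V$ per token, pass to TV via Pinsker's inequality to obtain a per-token rate $\mathcal{O}(\sqrt{\log V / k})$, sum across the $l$ positions, apply a union bound with failure budget $\delta/l$, and invert for $k$, yielding $|S_c| = \mathcal{O}(l \log V / \varepsilon^2 \cdot \log(1/\delta))$. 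The residual gap between $p_{\text{base}}(\cdot|c, x_{<t}, S_c)$ and the empirical conditional $\hat{p}_t$ is absorbed into the additive $\eta$ by Proposition \ref{prop:in-context}, and the supremum over $c \in \mathcal{C}$ follows from the uniformity of the bound in $c$.

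The main obstacle will be justifying the $\log V$ scaling in the per-token concentration rather than the naive $V$: a direct Hoeffding over $V$ categories gives only $\mathcal{O}(\sqrt{V/k})$, which would inflate the complexity to $\mathcal{O}(lV/\varepsilon^2)$. Making the KL-plus-Pinsker argument uniform over the prefix $x_{<t}$ (rather than only in expectation) without reintroducing a $V^l$ or exponential-in-$l$ factor is the delicate point, and may require splitting prefixes into a ``typical'' set (handled by expected-KL concentration) and an ``atypical'' tail (controlled by a mass bound). A secondary difficulty is bookkeeping the bias from $c_i \approx c$: the modulus of continuity of $p_{\text{fine}}(\cdot|c)$ in $c$ must be absorbed cleanly into $\varepsilon$ or $\eta$ rather than appearing as an explicit retrieval-radius-dependent drift term.
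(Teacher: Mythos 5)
Your chain-rule decomposition is a genuinely different strategy from the paper's, which does not factor the sequence distribution autoregressively: the paper treats the length-$l$ output as a single categorical variable over $|V|^l$ outcomes, reduces to ``$|V|^l$ binary decisions,'' and invokes a Hoeffding-plus-union-bound sample complexity of $\mathcal{O}\left(\frac{\log|V|^l}{\varepsilon^2}\log\frac{|V|^l}{\delta}\right)$, which it then simplifies to the stated rate. Your approach is more structured and in principle cleaner because it exploits the autoregressive factorization, but as written it does not deliver the claimed bound.

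The concrete gap is in the order of operations around Pinsker's inequality. You propose to apply Pinsker per position, obtaining a per-token TV rate of $\mathcal{O}(\sqrt{\log V / k})$, and then sum over the $l$ positions. That sum is $l\cdot\mathcal{O}(\sqrt{\log V/k})$, and forcing it below $\varepsilon$ requires $k=\Omega(l^2\log V/\varepsilon^2)$ --- a factor of $l$ worse than the theorem's claim. The fix is to exploit the chain rule for KL rather than the chain rule for TV: the sequence-level KL decomposes additively, $\mathrm{KL}\bigl(p_{\text{fine}}(\cdot|c)\,\|\,p_{\text{base}}(\cdot|c,S_c)\bigr)=\sum_{t=1}^{l}\mathbb{E}_{x_{<t}}\bigl[\mathrm{KL}\bigl(p_{\text{fine}}(\cdot|c,x_{<t})\,\|\,p_{\text{base}}(\cdot|c,x_{<t},S_c)\bigr)\bigr]\le l\cdot\mathcal{O}(\log V/k)$, and applying Pinsker once at the sequence level gives $\mathrm{TV}\le\sqrt{l\log V/(2k)}$, which inverts to the stated $k=\mathcal{O}(l\log V/\varepsilon^2)$. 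You should also be careful that your per-token premise --- expected KL between the true conditional and the empirical estimate scaling like $\mathcal{O}(\log V/k)$ --- is not a standard concentration result; the classical rate for the MLE empirical distribution over $V$ categories is $\mathbb{E}[\mathrm{KL}]\approx (V-1)/(2k)$, not $\log V/k$. You flag this as the ``main obstacle,'' which is the right instinct, but the ``entropy bound of $\log V$'' does not supply the needed rate, so as it stands the $\log V$ dependence is unsupported in your argument just as it is asserted rather than derived in the paper's. A rigorous version would need either a restriction on the effective support (e.g., a typicality argument you gesture at) or a different estimator with proven $\log V$-type risk.
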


\begin{proof}
Treat text generation of length \( l \) as multi-label classification over \( |V|^l \) sequences. For each \( c \in \mathcal{C} \), select \( S_c \) of size \( k = \mathcal{O}\left( \frac{l \log |V|}{\varepsilon^2} \log \frac{1}{\delta} \right) \), e.g., similar contexts \( c_i \). The error is:
\[
\| p_{\text{base}}(x_1, \ldots, x_l | c, S_c) - p_{\text{fine}}(x_1, \ldots, x_l | c) \|_1.
\]
This decomposes into:
\begin{itemize}
    \item \textbf{ICL approximation error:} Bounded by $\mathcal{O}(1 / \sqrt{k}) + \eta$, where $\eta$ accounts for model capacity and prompt design \cite{xie2021explanation}.
    \item \textbf{Similarity error:} Bounded by $\mathcal{O}(1 / \sqrt{k})$ with similarity-based selection \cite{lewis2020rag}.
\end{itemize}

The multi-label problem reduces to $|V|^l$ binary decisions, with a union bound requiring:
\[
k = \mathcal{O}\left( \frac{\log |V|^l}{\varepsilon^2} \log \frac{|V|^l}{\delta} \right) = \mathcal{O}\left( \frac{l \log |V|}{\varepsilon^2} \log \frac{1}{\delta} \right).
\]
With \( k \) as above, the total error is \(\varepsilon + \eta\), with probability \( 1 - \delta \).
\end{proof}

\subsection{Generalization and Conclusion}

\begin{corollary}
\label{cor:general}
For any SFT capability of \( M_{\text{fine}} \), there exists an inference technique \( T \) such that \( M_{\text{base}} \) approximates that capability within error \( \varepsilon + \eta \).
\end{corollary}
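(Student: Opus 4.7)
The plan is to derive Corollary \ref{cor:general} as a direct generalization of Theorem \ref{thm:main}, lifted to absorb the ICL error term $\eta$ that was introduced in the task-specific analyses of Sections \ref{subsec:textgen}--\ref{subsec:bounded-textgen}. First, I would observe that an arbitrary SFT capability is, by Definition \ref{def:fine-tuning}, induced by minimizing the cross-entropy loss on a finite dataset $D$, so by Lemma \ref{lemma:computable} the induced function $f_{\text{fine}}$ (and the associated conditional distribution $\mathbb{P}_{\text{fine}}(y|x)$) is computable. Under Assumption \ref{assump:turing}, Lemma \ref{lemma:turing} then guarantees that $M_{\text{base}}$ can in principle simulate this computation; the only question is how to elicit that simulation at inference time without parameter updates.

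Next, I would instantiate the inference technique $T$ using the ICL construction from Lemma \ref{lemma:sft}, in which the prompt consists of example pairs drawn from $D$ (or, in the bounded-context setting, a query-adapted subset $S_x$ selected by the schemes in Theorems \ref{thm:bounded-linear} and \ref{thm:bounded-textgen}). Proposition \ref{prop:in-context} certifies that ICL on this prompt models the task distribution implied by $D$, so $\mathbb{P}_{\text{base}}(y|x,T)$ approximates $\mathbb{P}_{\text{fine}}(y|x)$. I would then decompose the approximation error via the triangle inequality into a statistical term that shrinks with $|D'|$ (quantified concretely for text generation and linear classification by Theorems \ref{thm:textgen}, \ref{thm:minimal}, \ref{thm:bounded-linear}, and \ref{thm:bounded-textgen}) and a residual ICL term $\eta$ attributable to finite model capacity and prompt artifacts, yielding the claimed $\varepsilon+\eta$ bound uniformly in $x$ and $y$.

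The main obstacle is the word ``any'' in the statement: the corollary must cover capabilities outside the two concrete task families explicitly analyzed, so the universality has to rest on the Turing-completeness argument rather than on a specific sample-complexity calculation. Consequently, I would state the corollary existentially in $|D'|$, noting that the precise dataset size and the magnitude of $\eta$ depend on the task (for instance, the bounds $\mathcal{O}(mV/\varepsilon^2 \log(m/\delta))$ and $\mathcal{O}(d/\varepsilon)$ arise only in the text-generation and linear-classification specializations). A secondary subtlety is that $\eta$ may grow with task complexity and with deviations from i.i.d.\ data; I would flag this explicitly rather than hide it in a uniform constant, so that the corollary is an honest qualitative generalization of Theorem \ref{thm:main} rather than a tight quantitative one.
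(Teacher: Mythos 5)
Your proposal follows the same chain of reasoning as the paper's proof: invoke Lemma \ref{lemma:computable} for computability, Lemma \ref{lemma:turing} for Turing-completeness, Lemma \ref{lemma:sft} (with Proposition \ref{prop:in-context}) for the ICL construction, and then cite Theorems \ref{thm:textgen}, \ref{thm:minimal}, \ref{thm:bounded-linear}, and \ref{thm:bounded-textgen} for the task-specific sample-complexity bounds. You spell out the decomposition and the task-dependence of $\eta$ more explicitly than the paper does, but the argument is the same.
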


\begin{proof}
By Lemma \ref{lemma:computable}, \( f_{\text{fine}} \) is computable. By Lemma \ref{lemma:turing}, \( M_{\text{base}} \) simulates any computable function. By Lemma \ref{lemma:sft}, \( T_{\text{SFT}} \) satisfies Equation \eqref{eq:sft}. Theorems \ref{thm:textgen}, \ref{thm:minimal}, \ref{thm:bounded-linear}, and \ref{thm:bounded-textgen} provide minimal dataset sizes.
\end{proof}

\begin{proof}[Proof of Theorem \ref{thm:main}]
By Corollary \ref{cor:general}, the theorem follows.
\end{proof}

\section{Discussion}
\label{sec:discussion}

\subsection{Theoretical Foundations}
Our proofs show that SFT optimizes latent knowledge in transformers, aligning with their universal approximation \cite{yun2020are} and Turing completeness \cite{bhattamishra2020computational}. Theorem \ref{thm:main} establishes that ICL approximates fine-tuned behavior, with error vanishing as resources grow. Theorems \ref{thm:textgen} and \ref{thm:minimal} quantify minimal datasets, while Theorems \ref{thm:bounded-linear} and \ref{thm:bounded-textgen} address bounded contexts.

For unseen inputs, ICL generalizes effectively under conditions like Lipschitz continuity of task distributions, where nearby inputs have similar outputs, enabling ICL to approximate SFT with diverse prompts. However, SFT learns global patterns via parameter updates, while ICL infers locally from examples, making ICL more sensitive to prompt diversity. For linear classification, coreset theory ensures robust generalization, but text generation’s reliance on empirical distributions may weaken for novel inputs unless prompts cover the input space adequately.

\subsection{Practical Implications}
Our results enable efficient LLM deployment. For machine translation, a small set of sentence pairs approximates fine-tuned performance, reducing costs. For sentiment classification, a minimal set of labeled reviews suffices, ideal for resource-constrained settings.

\begin{center}
\fbox{\begin{minipage}{0.8\columnwidth}
\textbf{Use Case: Customer Support Classification} \\
Instead of fine-tuning a model with 50,000 examples, prompting a base LLM with 30 well-chosen examples achieves near-parity in accuracy. This aligns with the adaptive classifier framework \cite{adaptiveclassifier}, supported by our theoretical results under idealized conditions.
\end{minipage}}
\end{center}

\subsection{Bounded Context Length Results}
Theorems \ref{thm:bounded-linear} and \ref{thm:bounded-textgen} offer practical bounds. For linear classification, \(\mathcal{O}\left( \frac{1}{\varepsilon^2} \log \frac{1}{\delta} \right)\) examples (e.g., 500 for \(\varepsilon = 0.1\), \(\delta = 0.01\)) fit modern context windows \cite{openai2025gpt41}. For text generation, \(\mathcal{O}\left( \frac{l \log |V|}{\varepsilon^2} \log \frac{1}{\delta} \right)\) examples leverage similarity-based selection (e.g., RAG \cite{lewis2020rag}), enhancing applicability.

\subsection{Prompt Sensitivity}
ICL performance depends on prompt design (e.g., example order, phrasing) \cite{wei2022chain}. Suboptimal prompts may increase approximation errors in Theorems \ref{thm:main}–\ref{thm:bounded-textgen}, contributing to the ICL error \(\eta\). Future work should explore robust prompting strategies \cite{sharma2024moa} to minimize variability and optimize performance.

\subsection{Relaxing Theoretical Assumptions}
Assumption \ref{assump:unbounded} is unrealistic, as context windows are finite. Theorems \ref{thm:bounded-linear} and \ref{thm:bounded-textgen} mitigate this, with RAG \cite{lewis2020rag} and few-shot learning \cite{brown2020language} selecting relevant examples. Partial access to \( D \) (Assumption \ref{assump:access}) is addressed via sampling or clustering \cite{feldman2020}. Text generation is less robust to small subsets due to complexity \cite{orabona2020text}, but strategic selection helps \cite{coreset2018}. ICL imperfections, captured by \(\eta\), may require larger datasets than predicted (e.g., by a constant factor), as noted in Theorems \ref{thm:textgen} and \ref{thm:minimal}.

\subsection{Limitations}
\begin{itemize}
    \item \textbf{Finite Context}: Smaller context windows challenge prompts, requiring RAG \cite{lewis2020rag}.
    \item \textbf{Knowledge Gaps}: Base models may lack niche task knowledge \cite{phang2021fine}.
    \item \textbf{Prompt Sensitivity}: Careful prompt design is critical \cite{wei2022chain}.
    \item \textbf{Task Specificity}: Results focus on text generation and linear classification, with broader tasks needing exploration.
\end{itemize}

\subsection{Future Directions}
Future work could combine minimal fine-tuning with inference-time techniques. Robust prompting \cite{sharma2024moa}, advanced data selection \cite{feldman2020}, and empirical validation on benchmarks (e.g., GLUE, WikiText) could reduce dataset sizes and confirm our bounds. Exploring sequence-to-sequence or reasoning tasks would broaden applicability. Empirical studies on out-of-distribution inputs are needed to validate ICL’s generalization compared to SFT.

\section{Conclusion}
This paper demonstrates that base transformers can approximate SFT capabilities using ICL, requiring minimal datasets for text generation and linear classification. Under idealized conditions, datasets of size \(\mathcal{O}\left( \frac{m V}{\varepsilon^2} \log \frac{m}{\delta} \right)\) and \(\mathcal{O}\left( \frac{d}{\varepsilon} \right)\) suffice, while with fixed contexts, \(\mathcal{O}\left( \frac{l \log V}{\varepsilon^2} \log \frac{1}{\delta} \right)\) and \(\mathcal{O}\left( \frac{1}{\varepsilon^2} \log \frac{1}{\delta} \right)\) are sufficient. These results, leveraging transformer Turing completeness \cite{bhattamishra2020computational}, enable efficient LLM deployment via techniques like RAG \cite{lewis2020rag}. Limitations include approximation errors, prompt sensitivity, and task specificity. Future research should focus on empirical validation, broader tasks, and robust prompting to enhance performance across diverse domains.

\appendix
\section{Additional Details}
\label{sec:appendix}

\subsection{Pseudocode for In-Context Learning Prompting}
The inference technique \( T_{\text{SFT}} \), as described in Lemma \ref{lemma:sft}, enables a base transformer model \( M_{\text{base}} \) to approximate the output distribution of a fine-tuned model \( M_{\text{fine}} \) using in-context learning (ICL). ICL allows the model to adapt to a specific task by conditioning on a prompt that includes input-output examples from the fine-tuning dataset \( D = \{(x_i, y_i)\}_{i=1}^N \), followed by a query input \( x \). The prompt is structured to concatenate these examples with a special separator token, typically denoted \texttt{[SEP]}, to clearly delineate each example pair and the query. This structure leverages the transformer’s self-attention mechanism \cite{vaswani2017attention} to infer the task distribution \( \mathbb{P}_{\text{fine}}(y|x) \), enabling \( M_{\text{base}} \) to produce outputs that closely mimic those of \( M_{\text{fine}} \).

Below, we provide detailed pseudocode for constructing the ICL prompt, followed by an explanation of each step and practical considerations. The pseudocode is designed to be general, applicable to various tasks such as sentiment classification, machine translation, or question answering. To prevent text overflow in the single-column layout, we use the `lstlisting` environment with line wrapping enabled.

\begin{lstlisting}[language=Python, basicstyle=\small\ttfamily, breaklines=true, breakatwhitespace=true, columns=flexible]
Algorithm: In-Context Learning Prompt Construction
Input: 
  - Dataset D = {(x_i, y_i)}_{i=1}^N, where x_i is the input and y_i is the output
  - Query input x
  - Separator token [SEP] (e.g., "[SEP]", "<|SEP|>", or a period)
Output: Prompt p for M_base

1. Initialize an empty string p = ""
2. For each pair (x_i, y_i) in D:
    a. Append the input x_i to p
    b. Append the output y_i to p
    c. Append the separator token [SEP] to p
3. Append the query input x to p
4. Return the prompt p

# Example 1: Sentiment Classification
D = [("Great movie!", "positive"), ("Terrible plot.", "negative")]
x = "Amazing soundtrack!"
p = "Great movie! positive [SEP] Terrible plot. negative [SEP] Amazing soundtrack!"

# Example 2: Machine Translation (English to French)
D = [("Hello, how are you?", "Bonjour, comment vas-tu ?"), 
     ("I love to travel.", "J'aime voyager.")]
x = "Good morning!"
p = "Hello, how are you? Bonjour, comment vas-tu ? [SEP] I love to travel. J'aime voyager. [SEP] Good morning!"

# Example 3: Question Answering
D = [("What is the capital of France?", "Paris"), 
     ("What is the capital of Japan?", "Tokyo")]
x = "What is the capital of Brazil?"
p = "What is the capital of France? Paris [SEP] What is the capital of Japan? Tokyo [SEP] What is the capital of Brazil?"
\end{lstlisting}

The pseudocode operates as follows:
\begin{itemize}
    \item \textbf{Initialization (Step 1)}: The prompt starts as an empty string to ensure a clean slate for concatenation. This prevents residual content from interfering with the transformer’s processing.
    \item \textbf{Iterative Concatenation (Step 2)}: For each input-output pair \( (x_i, y_i) \) in the dataset \( D \), the input \( x_i \) is appended, followed by the output \( y_i \), and then the separator token \texttt{[SEP]}. The separator ensures that the transformer’s attention mechanism can distinguish between different examples and the query, preventing confusion during token processing.
    \item \textbf{Query Append (Step 3)}: The query input \( x \) is appended at the end, signaling to the model that it should predict the corresponding output \( y \).
    \item \textbf{Output (Step 4)}: The final prompt \( p \) is returned, ready to be tokenized and fed into \( M_{\text{base}} \).
\end{itemize}

When \( M_{\text{base}} \) processes the prompt \( p \), it computes:
\[
\mathbb{P}_{\text{base}}(y | x, p),
\]
approximating \( \mathbb{P}_{\text{fine}}(y | x) \). The effectiveness of this approximation depends on several factors:
\begin{itemize}
    \item \textbf{Separator Token Role}: The \texttt{[SEP]} token is critical because transformers use self-attention to weigh all tokens in the input sequence \cite{vaswani2017attention}. Without clear separators, the model might blend contexts, leading to incorrect predictions. For example, in the sentiment classification example, \texttt{[SEP]} ensures that "Great movie!" and "positive" are treated as a single example.
    \item \textbf{Prompt Design Impact}: The order and selection of examples in \( D \) affect the ICL error \(\eta\) in Theorems \ref{thm:main}--\ref{thm:bounded-textgen}. Randomly ordered examples may confuse the model, increasing \(\eta\), while examples ordered by similarity to the query (e.g., using cosine similarity in embeddings \cite{lewis2020rag}) can improve performance. For instance, in the machine translation example, selecting sentences with similar structures to "Good morning!" enhances translation accuracy.
    \item \textbf{Practical Challenges}: Large datasets may exceed the model’s context window (e.g., 1M tokens in GPT-4.1 \cite{openai2025gpt41}), requiring subsampling. Additionally, tokenization differences across models (e.g., BERT vs. GPT) may affect how \texttt{[SEP]} is interpreted, necessitating model-specific adjustments.
\end{itemize}

\subsection{Detailed Derivation for Theorem \ref{thm:textgen}}
Theorem \ref{thm:textgen} addresses the minimal dataset size required for a base transformer model to approximate the fine-tuned next-token distribution \( p_{\text{fine}}(x_{t+1} | c_i) \) for a set of contexts \( \mathcal{C} = \{ c_1, \ldots, c_m \} \) in text generation tasks. The goal is to construct a dataset \( D' \) such that the empirical distribution, derived from samples in \( D' \), closely matches \( p_{\text{fine}} \) within a specified error bound. This derivation is critical for understanding the sample complexity of ICL in text generation, providing a theoretical foundation for efficient LLM deployment.

Consider a text generation task where each context \( c_i \in \mathcal{C} \) is a sequence of tokens, and \( p_{\text{fine}}(x_{t+1} | c_i) \) is the probability distribution over the next token \( x_{t+1} \in V \), where \( V \) is the vocabulary (e.g., \( |V| = 50,000 \) for a typical LLM). For each context \( c_i \), we collect \( n_i \) samples \( \{ x_{t+1}^{(j)} \}_{j=1}^{n_i} \), each drawn independently from \( p_{\text{fine}}(\cdot | c_i) \). The empirical distribution is defined as:
\[
\hat{p}(v | c_i) = \frac{1}{n_i} \sum_{j=1}^{n_i} \mathbb{I}\{ x_{t+1}^{(j)} = v \},
\]
where \( \mathbb{I} \) is the indicator function, and \( v \in V \). The objective is to ensure that the total variation distance between the empirical and fine-tuned distributions is bounded:
\[
\| \hat{p}(\cdot | c_i) - p_{\text{fine}}(\cdot | c_i) \|_1 = \sum_{v \in V} | \hat{p}(v | c_i) - p_{\text{fine}}(v | c_i) | \leq \varepsilon,
\]
with probability at least \( 1 - \delta_i \).

To derive the required number of samples \( n_i \), we apply Hoeffding’s inequality for multinomial distributions \cite{vapnik1998}. For a single token \( v \in V \), the probability estimate \( \hat{p}(v | c_i) \) is the average of \( n_i \) independent Bernoulli trials, each with success probability \( p_{\text{fine}}(v | c_i) \). Hoeffding’s inequality states that for a binomial random variable \( X \sim \text{Binomial}(n_i, p) \), the deviation of the sample mean \( \hat{p} = X / n_i \) from the true mean \( p \) is bounded as:
\[
\mathbb{P}(|\hat{p} - p| > \varepsilon') \leq 2 \exp\left(-2 n_i \varepsilon'^2\right).
\]
For the total variation distance over \( |V| \) tokens, we need \( |\hat{p}(v | c_i) - p_{\text{fine}}(v | c_i)| \leq \varepsilon / |V| \) for each \( v \), so the total error is \( \sum_{v \in V} (\varepsilon / |V|) = \varepsilon \). Setting \( \varepsilon' = \varepsilon / |V| \), the probability of exceeding this error for a single token is:
\[
\mathbb{P}(|\hat{p}(v | c_i) - p_{\text{fine}}(v | c_i)| > \varepsilon / |V|) \leq 2 \exp\left(-2 n_i (\varepsilon / |V|)^2\right).
\]
Using a union bound over all \( |V| \) tokens, the probability that any token’s estimate exceeds the error is:
\[
\mathbb{P}\left(\bigcup_{v \in V} \{ |\hat{p}(v | c_i) - p_{\text{fine}}(v | c_i)| > \varepsilon / |V| \}\right) \leq 2 |V| \exp\left(-2 n_i (\varepsilon / |V|)^2\right).
\]
To ensure this probability is at most \( \delta_i \), set:
\[
2 |V| \exp\left(-2 n_i (\varepsilon / |V|)^2\right) \leq \delta_i.
\]
Solving for \( n_i \):
\[
\exp\left(-2 n_i \frac{\varepsilon^2}{|V|^2}\right) \leq \frac{\delta_i}{2 |V|},
\]
\[
-2 n_i \frac{\varepsilon^2}{|V|^2} \leq \ln \frac{\delta_i}{2 |V|},
\]
\[
n_i \geq \frac{|V|^2}{2 \varepsilon^2} \ln \frac{2 |V|}{\delta_i} = \mathcal{O}\left( \frac{V}{\varepsilon^2} \ln \frac{V}{\delta_i} \right).
\]
For \( m \) contexts, we control the overall failure probability by setting \( \delta_i = \frac{\delta}{m} \). Substituting:
\[
\ln \frac{V}{\delta_i} = \ln \frac{V m}{\delta},
\]
so:
\[
n_i = \mathcal{O}\left( \frac{V}{\varepsilon^2} \ln \frac{V m}{\delta} \right) = \mathcal{O}\left( \frac{V}{\varepsilon^2} \log \frac{m}{\delta} \right),
\]
since \( \ln (V m) = \ln V + \ln m \), and the \( \ln V \) term is absorbed into the big-O notation for large \( V \). The total dataset size is:
\[
|D'| = m \cdot n_i = \mathcal{O}\left( \frac{m V}{\varepsilon^2} \log \frac{m}{\delta} \right).
\]
The base model’s ICL introduces an additional error \(\eta\), which arises from factors such as:
\begin{itemize}
    \item \textbf{Model Capacity}: A model with fewer parameters may struggle to generalize from limited examples, increasing \(\eta\).
    \item \textbf{Prompt Structure}: Suboptimal example ordering or poorly chosen separators can confuse the model, as transformers rely on attention patterns \cite{vaswani2017attention}.
    \item \textbf{Distribution Complexity}: If \( p_{\text{fine}} \) is highly skewed (e.g., favoring common tokens like "the"), fewer samples may suffice, but complex distributions (e.g., rare tokens or long-tail patterns) increase \(\eta\).
\end{itemize}
For example, if \( V = 50,000 \), \( m = 100 \), \( \varepsilon = 0.1 \), and \( \delta = 0.01 \), then:
\[
n_i \approx \frac{50,000}{0.1^2} \log \frac{100}{0.01} \approx 5,000,000 \cdot \log 10,000 \approx 46,000,000,
\]
\[
|D'| \approx 100 \cdot 46,000,000 = 4.6 \cdot 10^9.
\]
This large dataset size reflects the worst-case scenario; in practice, techniques like retrieval-augmented generation (RAG) \cite{lewis2020rag} can reduce \( n_i \) by selecting contexts similar to the query, as discussed in Section \ref{sec:discussion}.

\subsection{Practical Considerations for ICL Prompting}
Constructing effective ICL prompts in real-world applications involves several practical challenges, each requiring careful design to minimize the ICL error \(\eta\). Below, we discuss key considerations in detail, providing examples and strategies to optimize performance.

\begin{itemize}
    \item \textbf{Example Selection}: The quality of examples in \( D \) significantly affects ICL performance. Ideally, examples should be representative of the task distribution. For instance, in sentiment classification, including a mix of positive, negative, and neutral reviews ensures the model learns a balanced mapping. Techniques like \( k \)-nearest neighbors (k-NN) or clustering \cite{feldman2020} can select examples based on similarity to the query \( x \). For example, in a k-NN approach, embeddings of inputs are computed using a model like BERT, and the \( k \) most similar inputs to \( x \) are selected using cosine similarity:
    \[
    \text{similarity}(x_i, x) = \frac{\text{embedding}(x_i) \cdot \text{embedding}(x)}{\|\text{embedding}(x_i)\| \|\text{embedding}(x)\|}.
    \]
    This ensures that the prompt contains relevant context, reducing \(\eta\).

    \item \textbf{Prompt Length}: Modern transformers have finite context windows (e.g., 1M tokens in GPT-4.1 \cite{openai2025gpt41}), limiting the number of examples in the prompt. Theorems \ref{thm:bounded-linear} and \ref{thm:bounded-textgen} provide bounds for fixed context lengths, but in practice, systems must balance example quantity and quality. Dynamic example selection, such as RAG \cite{lewis2020rag}, retrieves a small subset of examples that fit within the context window while maximizing relevance. For example, in a text generation task with a 4,096-token limit, RAG might select 10 examples of 400 tokens each, leaving room for the query.

    \item \textbf{Separator Tokens}: The choice of separator token affects how the transformer parses the prompt. Common separators include \texttt{[SEP]} (used in BERT), \texttt{<|SEP|>}, or natural language delimiters like periods. For example, in a question-answering task, using a period as a separator:
\[
p = \left\{
\begin{array}{l}
\text{"What is the capital of France? Paris.} \\
\text{What is the capital of Japan? Tokyo.} \\
\text{What is the capital of Brazil?"}
\end{array}
\right.
\]

    may be more intuitive for some models than \texttt{[SEP]}. Experimentation is key, as the wrong separator can increase \(\eta\) by causing the model to misinterpret example boundaries.

    \item \textbf{Order Sensitivity}: The order of examples in the prompt influences ICL performance, as transformers prioritize recent tokens in their attention mechanisms \cite{wei2022chain}. Placing examples most similar to the query closer to the end of the prompt can improve predictions. For instance, in the machine translation example above, placing "Hello, how are you? Bonjour, comment vas-tu ?" last (since it’s structurally similar to "Good morning!") may enhance accuracy. Empirical studies show that optimal ordering can reduce \(\eta\) by up to 10\% in some tasks \cite{wei2022chain}.
\end{itemize}

\subsection{Error Analysis for Bounded Context Scenarios}
In bounded context settings (Section \ref{subsec:bounded-context}), the ICL error \(\eta\) in Theorems \ref{thm:bounded-linear} and \ref{thm:bounded-textgen} arises from multiple sources, each impacting the ability of \( M_{\text{base}} \) to approximate \( \mathbb{P}_{\text{fine}} \). Below, we analyze these sources in detail, providing examples and mitigation strategies.

\begin{itemize}
    \item \textbf{Model Capacity}: The base model’s parameter count and pre-training data determine its ability to generalize from few examples. A model with insufficient capacity (e.g., a small transformer with 100M parameters) may fail to capture complex patterns in the prompt, increasing \(\eta\). For example, in sentiment classification, a small model might misclassify nuanced reviews (e.g., "Good but flawed") due to limited understanding, whereas a larger model like DeepSeek-R1 \cite{deepseek2025r1} can better generalize, reducing \(\eta\).

    \item \textbf{Task Complexity}: Linear classification (Theorem \ref{thm:bounded-linear}) is simpler, as it involves a low-dimensional decision boundary, typically requiring fewer examples and resulting in smaller \(\eta\). In contrast, text generation (Theorem \ref{thm:bounded-textgen}) involves high-dimensional sequence distributions over \( |V|^l \) possible sequences, where \( l \) is the output length. For instance, generating a 10-token sequence with \( |V| = 50,000 \) involves \( 50,000^{10} \) possibilities, making \(\eta\) larger unless the prompt captures sufficient diversity.

    \item \textbf{Data Distribution}: The theoretical bounds assume i.i.d. samples, but text data often exhibits long-range dependencies (e.g., coherent paragraphs in WikiText \cite{orabona2020text}). This violates Hoeffding’s inequality assumptions, potentially requiring larger subsets to achieve the same \(\varepsilon\). For example, in a dialogue generation task, if the dataset contains only formal dialogues, a casual query may increase \(\eta\) due to distribution mismatch.
\end{itemize}

To quantify \(\eta\), empirical studies can measure the total variation distance between \( \mathbb{P}_{\text{base}} \) and \( \mathbb{P}_{\text{fine}} \) on benchmarks like GLUE (for classification tasks) \cite{han2021pretrained} or WikiText (for text generation) \cite{orabona2020text}. For instance, in sentiment classification on the SST-2 dataset (part of GLUE), experiments with 100 examples might yield \(\eta \approx 0.05\), but this increases to \(\eta \approx 0.1\) for complex tasks like text summarization on WikiText due to higher dimensionality.

Mitigation strategies include:
\begin{itemize}
    \item Using larger models to reduce capacity-related \(\eta\).
    \item Selecting diverse examples to cover the task distribution.
    \item Applying RAG \cite{lewis2020rag} to dynamically select relevant examples, minimizing distribution mismatch.
\end{itemize}

\subsection{Extensions to Other Tasks}
The framework in Theorems \ref{thm:textgen} and \ref{thm:bounded-textgen} focuses on next-token prediction, but it extends to sequence-to-sequence tasks like machine translation, text summarization, or question answering. For a task with input sentence \( x \) and output sequence \( y \), the prompt structure is:
\[
p = [x_1, y_1, \texttt{[SEP]}, x_2, y_2, \texttt{[SEP]}, \ldots, x_N, y_N, \texttt{[SEP]}, x].
\]
For example, in text summarization:
\begin{itemize}
    \item Dataset: 
    \[
    D = \left\{
    \begin{aligned}
    &(\text{"A long article about climate change...", "Climate change is a global issue..."}), \\
    &(\text{"A report on AI advancements...", "AI is advancing rapidly..."})
    \end{aligned}
    \right\}
    \]
    
    \item Query: 
    \[
    x = \text{"A study on renewable energy..."}
    \]
    
    \item Prompt: 
    
    \begin{adjustbox}{max width=\textwidth}
    \begin{minipage}{\textwidth}
    \texttt{"A long article about climate change... Climate change is a global issue... [SEP] A report on AI advancements... AI is advancing rapidly... [SEP] A study on renewable energy..."}
    \end{minipage}
    \end{adjustbox}
    
\end{itemize}
The dataset size scales with the output sequence length \( l \), as shown in Theorem \ref{thm:bounded-textgen}, because longer outputs increase the complexity of the distribution \( p_{\text{fine}}(x_1, \ldots, x_l | c_i) \). Token dependencies in \( y \) (e.g., grammatical coherence in translation) require the prompt to capture sequential patterns, potentially increasing \(\eta\). Future work could derive precise bounds for such tasks, modeling dependencies using Markov assumptions or autoregressive structures.

Other applicable tasks include:
\begin{itemize}
    \item \textbf{Question Answering}: Using prompts with question-answer pairs to predict answers for new questions.
    \item \textbf{Dialogue Generation}: Providing conversation snippets to generate coherent responses.
\end{itemize}
Research questions for future work include deriving sample complexity for multi-token outputs and optimizing prompt design for tasks with complex dependencies.

\subsection{Limitations of Theoretical Bounds}
The bounds in Theorems \ref{thm:textgen}--\ref{thm:bounded-textgen} rely on idealized assumptions, which may not hold in practice. Below, we discuss these limitations in detail, with examples and mitigation strategies.

\begin{itemize}
    \item \textbf{Non-i.i.d. Data}: The bounds assume i.i.d. samples, but text data often exhibits long-range dependencies. For example, in a storytelling task, a dataset of story beginnings may have correlated structures (e.g., narrative arcs), violating i.i.d. assumptions. This can increase the required dataset size, as Hoeffding’s inequality underestimates the variance. Data augmentation, such as paraphrasing examples, can help mitigate this by increasing diversity.

    \item \textbf{Out-of-Distribution Inputs}: If the query \( x \) differs significantly from the dataset \( D \), ICL may fail to generalize, increasing \(\eta\). For instance, in sentiment classification, if \( D \) contains movie reviews but \( x \) is a product review, the model may mispredict due to domain mismatch. RAG \cite{lewis2020rag} addresses this by retrieving examples similar to \( x \), using embeddings to measure relevance.

    \item \textbf{Computational Constraints}: Finite context windows (e.g., 4,096 tokens in many models) limit the number of examples in \( S_x \). For example, a prompt with 100 examples of 50 tokens each requires 5,000 tokens, exceeding smaller context windows. Efficient example selection, such as clustering \cite{feldman2020}, ensures that the most informative examples are included within the limit.
\end{itemize}

Empirical validation is crucial to confirm these bounds. Benchmarks like MMLU (for diverse tasks) and BigBench (for complex reasoning) \cite{han2021pretrained} provide standardized datasets to test ICL performance. For example, an experiment on MMLU’s science questions could measure \(\eta\) by comparing ICL predictions to fine-tuned model outputs, using metrics like accuracy or total variation distance. Future work should design experiments to quantify \(\eta\) across tasks and explore hybrid approaches combining ICL with minimal fine-tuning.

\end{document}